\newcommand{\cG}{{\cal G}}
\newcommand{\cM}{{\cal M}}
\newcommand{\cO}{{\cal O}}
\newcommand{\cU}{{\cal U}}
\newcommand{\cB}{{\cal B}}
\newcommand{\cS}{{\cal S}}
\newcommand{\cP}{{\cal P}}
\newcommand{\bC}{{C}}
\newcommand{\bQ}{{Q}}
\newcommand{\bW}{{W}}
\newcommand{\bV}{{V}}
\newcommand{\bE}{{E}}
\newcommand{\bI}{{I}}
\newcommand{\bR}{{R}}
\newcommand{\bX}{{X}}
\newcommand{\bY}{{Y}}
\newcommand{\bZ}{{Z}}
\newcommand{\sdgsym}{\ensuremath{\cU}}
\newcommand{\sdgabbrv}{SMIG}
\newcommand{\sdgset}{{\bf SMIG}}
\DeclareMathAlphabet{\mathpzc}{OT1}{pzc}{m}{it}
\newtheorem{theorem}{Theorem}[section]
\newtheorem{corollary}[theorem]{Corollary}
\newtheorem{lemma}[theorem]{Lemma}
\newtheorem{definition}[theorem]{Definition} 
\newtheorem{proposition}[theorem]{Proposition}
\title{Learning from Pairwise Marginal Independencies}
\author{
{\bf Johannes Textor}\\  
Theoretical Biology \& Bioinformatics\\  
Utrecht University, The Netherlands \\
johannes.textor@gmx.de
\And
{\bf Alexander Idelberger} \\  
Theoretical Computer Science\\  
University of L\"{u}beck, Germany \\
alex@pirx.de
\And
{Maciej Li\'{s}kiewicz} \\  
Theoretical Computer Science\\  
University of L\"{u}beck, Germany \\
liskiewi@tcs.uni-luebeck.de
}
\tikzset{bi/.style={-,draw,shorten <=1.5pt, shorten >=1.5pt}}
\tikzset{dir/.style={->,draw,shorten <=0pt, shorten >=1.5pt}}
\tikzset{rdir/.style={<-,draw,shorten >=0pt, shorten <=1.5pt}}
\begin{document}

\maketitle

\begin{abstract}
We consider graphs that represent pairwise marginal independencies amongst a set of variables (for instance, the zero entries of a covariance matrix for normal data). We characterize the directed acyclic graphs (DAGs) that faithfully explain a given set of independencies, and derive algorithms to efficiently enumerate such structures. Our results map out the space of faithful causal models for a given set of pairwise marginal  independence relations. This allows us to show the extent to which causal inference is possible without using conditional independence tests.
\end{abstract}

\section{INTRODUCTION}

DAGs and other graphical models encode 
conditional independence (CI) relationships in
probability distributions. Therefore, CI tests 
are a natural building block of algorithms that infer such models from
data. For example, the PC algorithm for learning DAGs \citep{Kalisch2007} and the FCI
\citep{Spirtes2000} and RFCI \citep{Colombo2012} algorithms for learning 
maximal ancestral graphs are all based on CI tests. 

CI testing is 
still an ongoing research topic, to which the 
UAI community is contributing 
\citep[e.g.][]{Zhang2011,Doran2014}. But
at least for continuous variables,
CI testing will always remain more difficult than 
testing marginal independence for quite fundamental
reasons \citep{Bergsma2004}. Intuitively, the difficulty is 
that two variables $x$ and $y$
could be dependent ``almost nowhere'',
e.g., for only a few values of the conditioning
variable $z$. 
This suggests a two-staged approach to structure learning:
first try to learn as much as possible from simpler 
independence tests before applying 
CI tests. Here, we present a theoretical
basis for extracting as much information 
as possible from 
the simplest kind of stochastic independence --
pairwise marginal independence. 

More precisely, we will consider the following problem.
We are given the set of pairwise
marginal independencies that hold amongst some
variables of interest. Such sets can be represented
as graphs whose missing edges correspond to independencies
(Figure~\ref{fig:marginalbutnotmarkov}a). We call
such graphs \emph{marginal independence graphs}.
We wish to find DAGs on the same variables that
entail exactly the given set of pairwise marginal independencies 
(Figure~\ref{fig:marginalbutnotmarkov}b). We call 
such DAGs \emph{faithful}. 
Sometimes no such DAGs exist (e.g., Figure~\ref{fig:marginalbutnotmarkov}c).
Else, we are interested in finding the set of \emph{all}
faithful DAGs, hoping that this set will be substantially
smaller than the set of all possible DAGs on the same variables.
Those candidate DAGs could then be probed further by using joint
marginal or conditional independence tests.

\begin{figure}
\null\hfill
\begin{tikzpicture}[xscale=2]
\tikzstyle{every node}=[circle,fill,inner sep=1pt];
\tikzstyle{every edge}=[bi];
\node [fill=black] (a1) at (-0.5,0) {};
\node [fill=black] (a2) at (0,0) {};
\node  (a3) at (0.5,0) {};
\node (b1) at (-0.25,0.5) {};
\node [fill=black] (b2) at (0.25,0.5) {};
\node [fill=black]   (c1) at (0,1) {};
\draw  (b1) edge (c1) edge (b2) edge (a1) edge (a2)
      (a2) edge (b2) edge (a3) edge (a1)
     (b2) edge (c1) edge (a3);
\node [fill=none] at (0,-.5) {(a)};
\end{tikzpicture}
\hfill
\begin{tikzpicture}[xscale=2]
\tikzstyle{every node}=[circle,fill,inner sep=1pt];
\tikzstyle{every edge}=[dir];
\node [fill=black] (a1) at (-0.5,0) {};
\node [fill=black] (a2) at (0,0) {};
\node  (a3) at (0.5,0) {};
\node (b1) at (-0.25,0.5) {};
\node [fill=black] (b2) at (0.25,0.5) {};
\node [fill=black]   (c1) at (0,1) {};
\draw (a1) edge (b1) edge (a2);
\draw (a3) edge (a2) edge (b2);
\draw (c1) edge (b2) edge (b1);
\node [fill=none] at (0,-.5) {(b)};
\end{tikzpicture}
\hfill
\begin{tikzpicture}[xscale=2]
\tikzstyle{every node}=[circle,fill,inner sep=1pt];
\tikzstyle{every edge}=[bi];
\node [fill=black] (a1) at (-0.5,0) {};
\node [fill=black] (a2) at (0,0) {};
\node (b1) at (-0.25,0.5) {};
\node [fill=black] (b2) at (0.25,0.5) {};
\node [fill=black]   (c1) at (0,1) {};
\draw (b2) edge (c1) 
(a2) edge (b1) edge (b2) 
(a1) edge (a2) edge (b1) 
(b1) edge (b2) edge (c1);
\node [fill=none] at (0,-.5) {(c)};
\end{tikzpicture}
\hfill\null
\caption{(a) A \emph{marginal independence graph}
 $\sdgsym$ whose missing edges 
represent pairwise marginal 
independencies. (b) A \emph{faithful DAG} $\cG$ entailing 
the same set of pairwise marginal independencies
as $\sdgsym$. (c) A graph for which no such faithful DAG exists.}
\label{fig:marginalbutnotmarkov}
\end{figure}
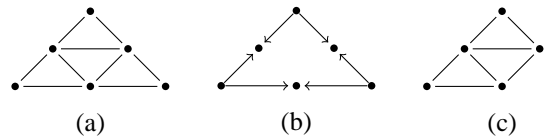

Other authors have represented marginal
(in)dependencies using bidirected graphs 
\citep{Drton2002,Richardson2003,Drton2008b}, instead of 
undirected graphs like we do here. 
We hope that the reader is compensated for this small departure from
community standards by the lower amount of clutter in our figures,
and the greater ease to link our work to standard graph 
theoretical results.
We also emphasize that we model only pairwise, and not 
higher-order joint dependencies.
However, for Gaussian data, pairwise independence entails
joint independence. In that case, our marginal 
independence graphs are equivalent to \emph{covariance graphs}
\citep{Cox1993,Pearl1994,Drton2002,Drton2008,Penha2013},
whose missing edges represent zero covariances.

Our results generalize the work of \citet{Pearl1994}
who showed (but did not prove) how to find \emph{some} faithful DAGs 
for a given covariance graph. We review
these and other connections to related work in
Section~\ref{sec:concepts} where we also 
link our problem to the theory of partially
ordered sets (posets). This connection allows us
to identify certain maximal and minimal faithful 
DAGs. Based on these ``boundary
DAGs'' we then derive 
a characterization of 
all faithful DAGs (Section~\ref{sec:consistency}),
and construct related enumeration algorithms
(Section~\ref{sec:algo}).
We use these algorithms
to explore the combinatorial structure of
faithful DAG models
(Section~\ref{sec:combinatorics}) which leads,
among other things, to a quantification of how much
pairwise marginal independencies reduce structural
causal uncertainty.
Finally, we ask what happens when a set
of independencies can \emph{not} be explained by any DAG.
How many additional variables will we
need? We prove that 
this problem is NP-hard (Section~\ref{sec:latents}).

Preliminary versions of many of the results presented 
in this paper were obtained
in the Master's thesis of the second author \citep{idelberger2014}.

\section{PRELIMINARIES} 

In this paper we use the abbreviation \emph{iff}
for the  connective ``if and only if''. 
A graph $\cG=(V,E)$ consists of a set of nodes
(variables) $V$ and set of edges $E$.
We consider 
undirected graphs (which we simply refer to as graphs),
directed graphs, and mixed graphs 
that can have both undirected edges (denotes as $x - y$) and directed
edges (denoted as $x \to y$). 
Two nodes are \emph{adjacent} if they
are linked by any edge. 
A \emph{clique} in a graph is 
a node set $\bC \subseteq \bV$ such that all 
$u,v \in \bC$ are adjacent. 
Conversely, an \emph{independent set}
is a node set  $\bI \subseteq \bV$  in which no two
 nodes $u,v\in\bI$ are adjacent.
A \emph{maximal clique}
is a clique for which no proper superset of nodes
is also a clique. For any $v \in V$, the
 \emph{neighborhood} $N(v)$ is the
set of nodes adjacent to  $v$ and the \emph{boundary}
$\textit{Bd}(v)$ is the neighborhood of $v$ including $v$, 
i.e. $\textit{Bd}(v)=N(v)\cup \{v\}$.
A node $v$ is called \emph{simplicial} if $\textit{Bd}(v)$ is
a clique. Equivalently, $v$ is simplicial iff 
$\textrm{Bd}(v) \subseteq \textrm{Bd}(w)$ for all $w \in N(v)$
\citep{kloks00}.
A clique that contains simplicial nodes
is called a \emph{simplex}.
Every simplex is a maximal
clique, and every simplicial node belongs
to exactly one simplex. The \emph{degree} $d(v)$
of a node $v$ is $|N(v)|$. If for two graphs
$\cG=(\bV,\bE(\cG))$ and $\cG'=(\bV,\bE(\cG'))$ we have
$\bE(\cG)\subseteq\bE(\cG')$, then 
$\cG$ is an \emph{edge subgraph} of $\cG'$ 
and $\cG'$ is an \emph{edge supergraph} of $\cG$.
The \emph{skeleton}
of a directed graph $\cG$ is obtained by 
replacing every edge $u \to v$ by an undirected
edge $u - v$.

A \emph{path} of length $n-1$ is a sequence of $n$ distinct 
nodes in which successive nodes are pairwise adjacent. 
A \emph{directed path}  $x \to \ldots \to y$ consists of directed edges
that all point towards $y$. 
In a directed graph, a node $u$ is an 
\emph{ancestor} of another node $v$ if $u=v$ or 
if there is a directed path $u \to \cdots \to v$. For each edge
$u \to v$, we say that $u$ is a \emph{parent} of $v$
and $v$ is a \emph{child} of $u$.
If two nodes $u,v$ in a directed graph 
have a common ancestor $w$ (which can be $u$ or $v$),
then the path
$u \gets \ldots \gets w \to \ldots \to v$
is called a \emph{trek} connecting $u$ and $v$.
A DAG is called \emph{transitive} if, for all
$u \neq v$,  it contains an edge $u \to v$ whenever
there is a directed path from $u$ to $v$. Given a DAG
$\cG$, the \emph{transitive closure} is the unique
transitive graph that implies the same ancestor
relationships as $\cG$, whereas the \emph{transitive
reduction} is the unique edge-minimal graph 
that implies the same ancestor
relationships.

In this paper we encounter several well-known graph
classes, e.g., chordal graphs and 
trivially perfect graphs. We will give brief definitions
when appropriate, but we direct the reader to
the excellent survey by \citet{Brandstaedt1999} for
 further details.

\section{SIMPLE MARGINAL INDEPENDENCE GRAPHS}

\label{sec:concepts}

In this section we define the class of  graphs
which can be explained using a directed acyclic 
graph (DAG) on the same variables.
We will refer to such graphs as \emph{simple marginal independence
graphs} (\sdgabbrv{}s).

\begin{definition}\label{definition:dependency:graph}
A graph $\sdgsym=(\bV,E(\sdgsym))$ 
is called the \emph{simple marginal independence graph} (\sdgabbrv), 
or \emph{marginal independence graph} of 
a DAG $\cG=(\bV,E(\cG))$ if for all
$v,w\in \bV$, $v-w \in E(\sdgsym)$ iff
$v$ and $w$ have a common ancestor in~$\cG$.
If $\sdgsym$ is the marginal independence graph of $\cG$
then we also say that $\cG$ is 
\emph{faithful} to $\sdgsym$.
$\sdgset$ is the set of all graphs 
$\sdgsym$ for which there exists a 
faithful DAG $\cal G$. Note that each DAG has
exactly one marginal independence graph.
\end{definition}

Again, we point out that marginal independence graphs
are often called (and drawn as) \emph{bidirected graphs} 
in the literature, though the term ``marginal independence graph''
has also been used by various authors
\citep[e.g.][]{Tan2014}.

\subsection{\sdgabbrv{}s and Dependency Models}

In this subsection we recall briefly the general
setting for modeling (in)dependencies proposed by 
\cite{pearl1987logic} and show the relationship 
between that model and \sdgabbrv{}s.
In the definitions below $\bV$ 
denotes a set of variables
and $\bX$, $\bY$ and $\bZ$ are 
three disjoint subsets of $\bV$.

\begin{definition}[\cite{pearl1987logic}]
A \emph{dependency model} $\cM$ over $\bV$ is
any subset of triplets $(\bX , \bZ, \bY)$ which 
represent independencies,
that is, $(\bX, \bZ, \bY) \in \cM$ asserts that 
$\bX$ is independent of $\bY$  given $\bZ$.

%

A \emph{probabilistic dependency model} $\cM_P$ is
defined in terms of a probability distribution $P$ over~$\bV$. 
By definition $(\bX, \bZ, \bY) \in \cM_P$ iff 
for any instantiation $\hat{x}$, $\hat{y}$ and $\hat{z}$ of the variables in these
subsets $P(\hat{x} \mid \hat{y} \ \hat{z} ) = P(\hat{x} \mid \hat{z})$.

%

A \emph{directed acyclic graph dependency model}
$\cM_{\cG}$ is defined in terms of a DAG $\cG$. 
By definition $(\bX, \bZ, \bY)\in \cM_{\cG}$
iff $\bX$ and $\bY$ are $d$-separated by $\bZ$ in~$\cG$ 
(for a definition of $d$-separation by a set $\bZ$ see \cite{pearl1987logic}).
\end{definition}

We define a \emph{marginal} dependency model, resp. 
marginal probabilistic and marginal DAG dependency model, 
analogously as \cite{pearl1987logic} with the restriction 
that the second component of any triple $(\bX, \bZ, \bY)$ 
is the empty set. 
Thus, such marginal dependency models are
sets of pairs $(\bX, \bY)$.
It is easy to see that the following 
properties are satisfied.

\begin{lemma}
Let $\cM$ be a marginal probabilistic dependency model
or a marginal DAG dependency model. 
Then $\cM$ is closed under:\\ 
\hspace*{3mm}Symmetry: $(\bX, \bY)\in \cM \  \Leftrightarrow\ (\bY, \bX)\in \cM$ and \\
\hspace*{3mm}Decomposition: $(\bX, \bY\cup \bW)\in \cM \  \Rightarrow\ (\bX, \bY)\in \cM$.\\
Moreover, if $\cM$ is a marginal DAG dependency model then it is also closed under\\
\hspace*{3mm}Union: $(\bX, \bY), (\bX, \bW)\in \cM \  \Rightarrow\ (\bX, \bY\cup \bW)\in \cM$.
\end{lemma}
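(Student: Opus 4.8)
The plan is to verify each closure property separately, observing that Symmetry and Decomposition are the familiar semi-graphoid axioms (valid for both model types), whereas Union is the composition property, which I will argue holds for $d$-separation but not, in general, for an arbitrary distribution — hence its restriction to DAG models. For the probabilistic model $\cM_P$ I will work directly from the defining equation $P(\hat{x}\mid \hat{y}) = P(\hat{x})$ that results from setting $\bZ=\emptyset$ in the definition; for the DAG model $\cM_{\cG}$ I will use the set-level characterization of $d$-separation, namely that $\bX$ and $\bY$ are $d$-separated by $\emptyset$ iff every path joining a node of $\bX$ to a node of $\bY$ is blocked.

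For Symmetry: in $\cM_P$, the equation $P(\hat{x}\mid \hat{y}) = P(\hat{x})$ is equivalent to the factorization $P(\hat{x},\hat{y}) = P(\hat{x})P(\hat{y})$, which is symmetric in $\bX$ and $\bY$ and hence equivalent to $P(\hat{y}\mid \hat{x}) = P(\hat{y})$. In $\cM_{\cG}$, a path joining $a\in\bX$ to $b\in\bY$ is, read backwards, the same path joining $b$ to $a$ with the same collider/non-collider pattern, so the collection of paths that must be blocked is identical for $(\bX,\bY)$ and $(\bY,\bX)$. For Decomposition: in $\cM_P$, assuming $P(\hat{x}\mid \hat{y},\hat{w}) = P(\hat{x})$ for every instantiation, I marginalize out $\bW$ via $P(\hat{x},\hat{y}) = \sum_{\hat{w}} P(\hat{x}\mid \hat{y},\hat{w})\,P(\hat{y},\hat{w}) = P(\hat{x})\sum_{\hat{w}} P(\hat{y},\hat{w}) = P(\hat{x})P(\hat{y})$ (with an integral replacing the sum in the continuous case), giving $(\bX,\bY)\in\cM_P$. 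In $\cM_{\cG}$, every path from $\bX$ to $\bY$ is in particular a path from $\bX$ to $\bY\cup \bW$; since the hypothesis blocks all of the latter, it blocks all of the former.

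For Union in $\cM_{\cG}$: take any path $\pi$ joining some $a\in\bX$ to a node of $\bY\cup \bW$. Its far endpoint lies either in $\bY$, in which case $\pi$ is blocked by the hypothesis $(\bX,\bY)\in\cM_{\cG}$, or in $\bW$, in which case it is blocked by $(\bX,\bW)\in\cM_{\cG}$; either way $\pi$ is blocked, so $\bX$ and $\bY\cup \bW$ are $d$-separated by $\emptyset$ and $(\bX,\bY\cup \bW)\in\cM_{\cG}$.

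I do not expect any single step to be a serious obstacle — this is a routine lemma — but two points require care. First, I must use the set-level definition of $d$-separation correctly, so that a path is classified by its endpoints even though it may pass through nodes of the other set. Second, I should be explicit about why the endpoint argument for Union fails in $\cM_P$: marginal independence of $\bX$ from $\bY$ and from $\bW$ separately imposes nothing on the joint behaviour of $(\bY,\bW)$, which is exactly the gap between pairwise and joint independence that motivates this paper, so Union is asserted only for DAG models.
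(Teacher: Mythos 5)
Your proof is correct, and it follows the standard route one would expect: the paper itself states this lemma without proof (prefacing it only with ``It is easy to see''), so your argument simply supplies the routine details. Your verification of Symmetry and Decomposition for both model types, the endpoint case split for Union under $d$-separation with $\bZ=\emptyset$, and your closing remark on why Union fails for $\cM_P$ (which matches the paper's own xor counterexample given immediately after the lemma) are all accurate.
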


The marginal probabilistic dependency model 
is not closed under union in general. 
For instance, consider 
two independent, uniformly distributed
 binary variables $y$ and $w$
 and let $x=y\oplus w$, 
where $\oplus$ denotes xor of two bits.
For the model 
$\cM_P$ defined in terms of probability 
over $x,y,w$ we have that $(\{x\},\{y\})$ and 
$(\{x\},\{w\})$ belong to $\cM_P$ but 
$(\{x\},\{y,w\})$ does not.

In this paper we will \emph{not} assume that the marginal
independencies in the data are closed under union.
Instead, we only consider pairwise independencies,
which we formalize as follows. 

\begin{definition}
Let $\cM$ be a marginal probabilistic dependency model
over $V$.
Then the simple marginal independence graph $\cU=(V,E(\sdgsym))$ 
of $\cM$ is the graph in which $x-y \in E(\sdgsym)$ 
iff $(\{x\},\{y\})\not\in \cM$. 
\end{definition}

Thus, in general, marginal independence graphs do not contain any information
on higher-order \emph{joint} independencies present
in the data. However, under certain common parametric assumptions, 
dependency models would be closed under union as well. This 
holds, for instance, if the data are normally distributed. In that case,
marginal independence is equivalent to zero covariance, pairwise independence
implies joint independence, and marginal independence graphs become covariance graphs.


%

The following is not difficult to see.

\begin{proposition}
A marginal dependency model $\cM$ which is closed under 
symmetry, decomposition, and union coincides with 
the transitive closure of 
$\{(\{x\},\{y\}): x,y \in \bV\}\cap \cM$
over symmetry and union.
\end{proposition}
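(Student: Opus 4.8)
The plan is to prove the two inclusions separately. Write $P = \{(\{x\},\{y\}): x,y\in\bV\}\cap\cM$ for the pairwise part of $\cM$, and let $\overline{P}$ denote the smallest set of pairs containing $P$ that is closed under symmetry and union (this is what I read ``transitive closure $\ldots$ over symmetry and union'' to mean: iterate the two inference rules to a fixpoint). We must show $\cM = \overline{P}$.

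The easy inclusion is $\overline{P}\subseteq\cM$. By construction $P\subseteq\cM$, and by hypothesis $\cM$ is itself closed under symmetry and union. Since $\overline{P}$ is the \emph{smallest} superset of $P$ closed under these two rules, it is contained in every such superset, in particular in $\cM$. Note that this direction uses only the union and symmetry closure of $\cM$, not decomposition.

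For the reverse inclusion $\cM\subseteq\overline{P}$, I would take an arbitrary $(\bX,\bY)\in\cM$ and argue in two phases. First, decompose it down to singletons: fixing $x\in\bX$ and $y\in\bY$, repeated application of decomposition (stripping away $\bY\setminus\{y\}$) yields $(\bX,\{y\})\in\cM$, then symmetry followed by decomposition again (stripping $\bX\setminus\{x\}$) yields $(\{x\},\{y\})\in\cM$; since $\bX$ and $\bY$ are disjoint we have $x\neq y$, so this pair lies in $P$. Hence every atom $(\{x\},\{y\})$ with $x\in\bX$, $y\in\bY$ belongs to $P\subseteq\overline{P}$. Second, reassemble $(\bX,\bY)$ from these atoms using only symmetry and union: for a fixed $x$, union over all $y\in\bY$ gives $(\{x\},\bY)\in\overline{P}$; applying symmetry gives $(\bY,\{x\})\in\overline{P}$ for each $x\in\bX$; unioning these over $x\in\bX$ gives $(\bY,\bX)\in\overline{P}$; and a final symmetry step yields $(\bX,\bY)\in\overline{P}$.

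I expect the only real care to lie in the bookkeeping of the reassembly phase and in the disjointness side-conditions: each union step must produce a legitimate triple, which it does because the fixed element is always distinct from every element being accumulated, so the two sides stay disjoint throughout. The degenerate cases in which $\bX$ or $\bY$ is empty (or a singleton) should be dispatched separately, since then some of the iterated unions are vacuous but the statement still holds trivially. No single step is deep; the content of the proposition is precisely that decomposition lets us break any independence into its pairwise atoms, while symmetry and union suffice to rebuild it from those atoms, so the pairwise part $P$ already determines $\cM$ completely.
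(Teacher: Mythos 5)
Your proof is correct. The paper actually gives no proof of this proposition at all---it is stated with the remark that it ``is not difficult to see''---and your two-phase argument (the minimality of the closure $\overline{P}$ for one inclusion; decomposition plus symmetry to reduce any $(\bX,\bY)\in\cM$ to its pairwise atoms, then union plus symmetry to reassemble, for the other) is exactly the routine argument the authors are alluding to. The only loose end is the degenerate case of pairs with an empty component, which you already flag and which is immaterial under the usual convention that such vacuous independencies are either excluded or present in every model.
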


This Proposition entails that if the marginal dependencies in the data are closed
under these properties, then the entire marginal dependency
model is represented by the marginal 
independence graph.




\subsection{\sdgabbrv{}s and Partially Ordered Sets}

To reach our aim of a complete and constructive
characterization of the DAGs faithful to a given \sdgabbrv,
it is useful to observe that marginal independence graphs are invariant
with respect to the insertion or deletion of transitive edges from 
the DAG. We formalize this as follows.

\begin{definition}
A (labelled) \emph{poset} $\cP$ is a DAG that is
identical to its transitive closure.
\end{definition}

\begin{proposition} The marginal independence graphs
 of a DAG $\cG$ and its transitive closure
 $\cP(\cG)$ are identical.
 \end{proposition}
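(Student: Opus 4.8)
The plan is to reduce the statement entirely to the observation that the ancestor relation is invariant under transitive closure. By the definition given in the Preliminaries, $\cP(\cG)$ is precisely the unique transitive graph that implies the same ancestor relationships as $\cG$; hence for every pair $u,v \in \bV$, $u$ is an ancestor of $v$ in $\cG$ iff $u$ is an ancestor of $v$ in $\cP(\cG)$. First I would record this as the single fact on which everything else rests.

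Next I would unfold the definition of the marginal independence graph. For any DAG, the edge $v-w$ is present in its \sdgabbrv{} iff $v$ and $w$ have a common ancestor, i.e. iff there is some node $u$ (possibly equal to $v$ or $w$) that is an ancestor of both. Since membership of each edge is decided purely by the existence of such a common ancestor, it suffices to show that $v$ and $w$ share a common ancestor in $\cG$ exactly when they share one in $\cP(\cG)$. This equivalence is immediate from the invariance of the ancestor relation: fixing $v,w$, if $u$ is a common ancestor of $v$ and $w$ in $\cG$, then $u$ is an ancestor of each in $\cP(\cG)$ as well, so $u$ remains a common ancestor there, and conversely. Quantifying over all pairs $v,w$ then gives $E(\sdgsym_\cG) = E(\sdgsym_{\cP(\cG)})$, and since both graphs share the vertex set $\bV$, the two marginal independence graphs are identical.

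The only point requiring a little care — and the nearest thing to an obstacle in an otherwise immediate argument — is justifying the invariance of the ancestor relation rather than merely citing it. If one prefers not to lean directly on the definition of transitive closure, I would establish it by reasoning about the transitive edges one at a time: inserting an edge $u \to v$ when a directed path $u \to \cdots \to v$ already exists cannot create any new reachability, because any directed path exploiting the new edge can be rerouted through the pre-existing path to yield a directed path with the same endpoints in the original graph; symmetrically, deleting such a transitive edge preserves all directed paths. This confirms that inserting or deleting transitive edges leaves every ancestor set, and therefore every common-ancestor relation, unchanged — which is exactly what the argument needs.
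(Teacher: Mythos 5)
Your argument is correct and follows exactly the paper's own (two-sentence) proof: unfold the definition of the marginal independence graph and observe that transitive edges do not change ancestral relationships. The extra care you take in justifying the invariance of the ancestor relation is a harmless elaboration of the same idea.
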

 
 \begin{proof}
 Two nodes are not adjacent in the marginal independence graph
iff they have no common ancestor in the DAG. 
Transitive edges do not influence ancestral relationships.
 \end{proof}

We thus restrict our attention to finding \emph{posets} 
that are faithful to a given \sdgabbrv. Note that faithful DAGs can 
then be obtained by deleting transitive edges from faithful posets;
since no DAG obtained in this way can be an edge subgraph of two different posets,
this construction is unique and well-defined. In particular, by 
deleting \emph{all} transitive edges from a poset, we obtain 
a sparse graphical representation of the poset as defined below. 

\begin{definition}
Given a poset $\cP=(\bV,\bE)$, its \emph{transitive reduction} is the unique
DAG $\cG_\cP=(\bV,\bE')$ for which $\cP(\cG)=\cP$ and 
 $\bE'$ is the smallest set where 
$\bE' \subseteq \bE$.
\end{definition}

Transitive reductions are also known as \emph{Hasse diagrams}, 
though Hasse diagrams are usually unlabeled. 
Different posets can have the same marginal independence
graphs, e.g. the posets with Hasse diagrams
$\cP_1= x \to y \to z$ and $\cP_2= x \gets y \to z$.
Similarly, Markov equivalence is a sufficient but not 
necessary condition to inducing the same marginal independence 
graphs (adding an edge $x\to z$ to $\cP_2$ changes 
the poset and the Markov equivalence class, but 
not the marginal independence graph). 

\subsection{Recognizing \sdgabbrv{}s}

\label{sec:character}

We first recall existing results that show
which graphs admit a faithful DAG at all,
and how to find such DAGs if possible.
Note that many of these
results have been stated without
proof \citep{Pearl1994}, but our connection to
posets will make some of these proofs straightforward.
The following notion related to posets is required.

\begin{definition}[Bound graph \citep{Morris1982}]
For a poset $\cP=(\bV,\bE)$, the \emph{bound graph} 
$\cB=(\bV,\bE')$ of $\cP$
is the graph where
$x - y \in \bE'$ iff $x$ and $y$ share a \emph{lower bound}, 
i.e., have a common ancestor in $\cP$.
\label{definition:boundgraph}
\end{definition}

\begin{theorem}
\label{thm:graphcharact}
$\sdgset$ is the set of all graphs
for which every edge is contained in a simplex.
\end{theorem}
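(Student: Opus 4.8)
The plan is to first invoke the earlier Proposition stating that a DAG and its transitive closure share the same marginal independence graph, together with the observation that the marginal independence graph of a poset is exactly its \emph{bound graph} (Definition~\ref{definition:boundgraph}); both are defined by the common-ancestor relation. This reduces the theorem to characterizing bound graphs of posets, and I would then prove the two inclusions separately.

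For the forward direction, suppose $\sdgsym$ is the bound graph of a poset $\cP$ and fix an edge $x-y$. Since $x$ and $y$ share a common ancestor and $\cP$ is finite, there is a minimal element $m$ of $\cP$ that is an ancestor of that common ancestor, and by transitivity $m$ is itself a common ancestor of $x$ and $y$. I claim $m$ is simplicial with $\textit{Bd}(m)=D(m)$, the set of descendants of $m$ (including $m$). Indeed, a minimal element has itself as its only ancestor, so a node is adjacent to $m$ in $\sdgsym$ precisely when $m$ is one of its ancestors, i.e. when it lies in $D(m)$. Any two descendants of $m$ share the common ancestor $m$, so $D(m)$ is a clique; moreover it is maximal, because a node adjacent to all of $\textit{Bd}(m)$ is in particular adjacent to $m$ and hence already in $\textit{Bd}(m)$. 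Thus $\textit{Bd}(m)$ is a simplex, and since $x,y\in D(m)=\textit{Bd}(m)$ the edge $x-y$ lies in it.

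For the converse, suppose every edge of $\sdgsym$ is contained in a simplex, and construct a faithful poset directly as a height-one DAG $\cG$: for each simplex $S$ choose one simplicial node $m_S\in S$ and add directed edges $m_S\to v$ for all $v\in S\setminus\{m_S\}$. Because a simplicial node belongs to exactly one simplex, each chosen $m_S$ lies in no other simplex, so every $m_S$ is a source and no target is itself a source; hence $\cG$ is acyclic. The core computation is to verify that two distinct nodes $x,y$ have a common ancestor in $\cG$ \emph{iff} they lie together in some simplex: if $x,y\in S$ then $m_S$ is a common ancestor, and conversely any common ancestor is either some source $m_S$ or one of $x,y$ acting as such a source, each case forcing $x,y\in S$. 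Finally, co-membership in a simplex coincides with adjacency in $\sdgsym$, since a simplex is a clique (so it only produces edges of $\sdgsym$) while by hypothesis every edge of $\sdgsym$ already sits in a simplex. Hence the bound graph of $\cG$ equals $\sdgsym$; isolated nodes are trivially simplicial and are handled by the same argument, so $\sdgsym\in\sdgset$.

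I expect the main obstacle to be this converse, and specifically the equivalence ``common ancestor in $\cG$'' $\Leftrightarrow$ ``shared simplex.'' One must rule out spurious common ancestors arising from a node that happens to belong to several simplices, as well as confirm acyclicity of $\cG$. The fact that each chosen representative $m_S$ is \emph{simplicial}, and therefore occupies a unique simplex, is exactly what prevents such leakage and guarantees that the sources are well separated.
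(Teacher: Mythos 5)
Your proof is correct and follows essentially the same route the paper indicates: it identifies the marginal independence graph of a DAG (after passing to the transitive closure) with the bound graph of the corresponding poset, and then characterizes bound graphs via the correspondence between minimal elements of the poset and simplicial nodes of $\sdgsym$. The only difference is that the paper delegates this characterization to the cited results of McMorris--Zaslavsky and Pearl--Wermuth, whereas you supply the full argument yourself --- the maximality of $\textit{Bd}(m)$ for a minimal element $m$, and the height-one construction with one simplicial representative per simplex for the converse --- and both halves check out, including the use of the fact that a simplicial node lies in exactly one simplex to rule out spurious common ancestors and cycles.
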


\begin{proof}
This is Theorem~2 in \citet{Pearl1994}
(who referred to simplexes as ``exterior cliques'').
Alternatively, we can 
observe that the marginal independence graph
$\sdgsym$ of a poset $\cP$ (Definition~\ref{definition:dependency:graph}) 
is equal to its  bound graph (Definition~\ref{definition:boundgraph}). 
The characterization of 
bound graphs as ``edge simplicial'' graphs
has been proven by \citet{Morris1982} by noting
that simplicial nodes in $\sdgsym$ correspond to 
possible minimal elements in $\cP$.
We note that this result
predates the equivalent 
statement in \citet{Pearl1994}.
\end{proof}

Though all bound graphs have a faithful poset,
not all bound graphs have one with
the same skeleton; see Figure~\ref{fig:marginalbutnotmarkov}a,b
for a counterexample. However, the graphs for
which a poset with the same skeleton can be found
are nicely characterizable in terms of forbidden
subgraphs.

\begin{theorem}[\citet{Pearl1994}]
Given a graph $\sdgsym$, a DAG $\cG$ that is
faithful to $\sdgsym$ and has the same skeleton exists iff
$\sdgsym$ is trivially perfect (i.e., $\sdgsym$
has no $P_4$=\tikz[scale=.3,shorten <=1pt,shorten >=1pt,baseline=-2pt] 
\draw (1,0) edge [-] (0,0) edge  [-] (2,0)
(3,0) edge  [-] (2,0);
nor a $C_4$=
\tikz[scale=.3,shorten <=1pt,shorten >=1pt,baseline=1.8pt] \draw (0,0) edge [-] (1,0) edge  [-] (0,1)
(1,1) edge  [-] (0,1) edge [-] (1,0);
\ 
as 
induced subgraph).
\label{thm:triviallyperfect}
\end{theorem}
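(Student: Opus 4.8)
The plan is to prove Theorem~\ref{thm:triviallyperfect} by establishing both directions of the iff separately, leveraging the poset framework developed above. Throughout, recall that by the preceding Proposition we may work with posets rather than arbitrary DAGs, and that a DAG faithful to $\sdgsym$ with the same skeleton exists iff some poset has $\sdgsym$ as its marginal independence graph and the poset's transitive reduction has skeleton equal to $\sdgsym$. The cleaner way to phrase the forward direction is contrapositive: if $\sdgsym$ contains an induced $P_4$ or $C_4$, then no faithful DAG with the same skeleton can exist.

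For the contrapositive (forbidden subgraphs are necessary), I would argue locally. Suppose $\cG$ is faithful to $\sdgsym$ and has the same skeleton, and consider four nodes inducing a $C_4$, say $a-b-c-d-a$ with $a\notPerp c$ and $b\notPerp d$ absent as edges. Since $a-b$ is an edge of $\sdgsym$, $a$ and $b$ share a common ancestor in $\cG$; the same holds for each consecutive pair. The nonadjacency of $a,c$ forces them to have \emph{no} common ancestor, and likewise for $b,d$. The heart of the argument is to show that the four ancestral constraints around the cycle cannot be simultaneously satisfied by any acyclic orientation of the $C_4$ skeleton: tracing the implied ancestor relationships around the square produces either a shared ancestor for a diagonal pair or a directed cycle. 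The $P_4$ case $a-b-c-d$ (with $a\notPerp c$, $b\notPerp d$, $a\notPerp d$ missing) is handled by the same kind of local case analysis on the orientation of the three skeleton edges.

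For the converse (trivial perfection suffices), I would use a known structural characterization: a graph is trivially perfect iff it is $\{P_4,C_4\}$-free, and such graphs are exactly the \emph{comparability graphs of forests} (equivalently, they are built by disjoint unions and the addition of universal vertices, and admit a rooted-forest representation where adjacency corresponds to ancestor/descendant in the forest). Given such a forest representation, I would orient each tree edge from child toward parent, making each rooted tree into a poset whose Hasse diagram is exactly the tree. Two nodes are then adjacent in $\sdgsym$ iff one is an ancestor of the other in the forest, which is precisely sharing a common ancestor (the lower of the two); nonadjacent nodes lie in incomparable branches or different trees and share no ancestor. Hence the resulting DAG is faithful and, since each forest edge becomes a single directed edge with no transitive edges added, its skeleton equals $\sdgsym$.

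The main obstacle I anticipate is the converse, specifically pinning down the correct forest/tree representation of trivially perfect graphs and verifying that ``common ancestor in the constructed poset'' matches ``adjacent in $\sdgsym$'' exactly, with no spurious shared ancestors created by the tree root. One must be careful that distinct maximal cliques (simplices) overlap only in the way the nested-neighborhood structure of trivially perfect graphs dictates, so that the orientation does not accidentally connect two independent-set vertices through a common root. I would therefore lean on the characterization of trivially perfect graphs via their cotree/laminar family of closed neighborhoods and construct the poset from that laminar family, which makes the ancestor relation transparent and the skeleton-preservation automatic. The necessity direction, by contrast, should reduce to a short finite case check once the ancestral constraints are written down.
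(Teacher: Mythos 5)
First, note that the paper offers no proof of Theorem~\ref{thm:triviallyperfect} at all: it is imported verbatim from \citet{Pearl1994}, and the closest in-paper argument is the later theorem on faithful tree posets, which --- like your sufficiency direction --- rests on Wolk's identification of trivially perfect graphs with comparability graphs of rooted forests \citep{Wolk1965}. So your proposal must stand on its own, and most of it does. The necessity direction is sound: because $\cG$ has the same skeleton as $\sdgsym$, every edge of the induced $P_4$ or $C_4$ is itself oriented in $\cG$, so any vertex that is a local source on the cycle or path is already a common ancestor of two nonadjacent vertices. For the $C_4$ this forces every vertex to have in-degree at least one on the cycle, i.e.\ a directed $4$-cycle, contradicting acyclicity; for the $P_4$ the case split over the three edge orientations closes in the same way. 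Note that the nonadjacency constraints forbid common ancestors \emph{anywhere} in $\cG$, so exhibiting one on the induced subgraph suffices; the local argument is legitimate.

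The sufficiency direction contains one concrete error. You write that ``since each forest edge becomes a single directed edge with no transitive edges added, its skeleton equals $\sdgsym$.'' It does not: the DAG whose edges are exactly the forest edges (the Hasse diagram of the tree poset) has the \emph{forest} as its skeleton, whereas $\sdgsym$ is the comparability graph of the forest, which in general has many more edges --- for instance $K_3$ is trivially perfect and is the comparability graph of a rooted path on three nodes, whose Hasse diagram has only two edges. The fix is immediate and consistent with the rest of your construction: take $\cG$ to be the full comparability orientation, i.e.\ include $u \to v$ whenever $v$ is a strict ancestor of $u$ in the rooted forest. This transitive DAG has skeleton exactly $\sdgsym$, and your faithfulness verification goes through unchanged, since adding transitive edges does not alter ancestor relations (as the paper's own proposition on transitive closures records). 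With that repair the proof is complete; your worry about spurious common ancestors through the root is moot, because with edges oriented toward the root the root is a sink, and the ancestor sets of incomparable forest nodes are disjoint subtrees.
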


It is known that the trivially perfect graphs 
are the intersection of the bound graphs and the 
chordal graphs \citep[Figure~\ref{fig:inclusions}; ][]{Grant2006}.

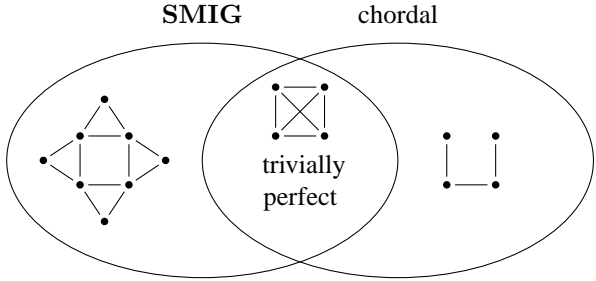
\begin{figure}

\centering

\begin{tikzpicture}[scale=1.3]
\node (cog) at (1,1.5) {chordal};
\node (mdg) at (-1,1.5) {$\sdgset$};
\node [text width=1cm,align=center] (tpg)  at (0,0) {\ \\ \ \\ trivially\\perfect};

\tikzstyle{every edge}=[bi]
\tikzstyle{every node}=[circle,fill=black,inner sep=1pt];

\begin{scope}[yshift=0.5cm,scale=0.5]
\node (a) at (-0.5,-0.5) {};
\node (b) at (0.5,-0.5) {};
\node (c) at (-0.5,0.5) {};
\node (d) at (0.5,0.5) {};
\draw (a) edge (b);
\draw (a) edge (c);
\draw (b) edge (d);
\draw (c) edge (d);
\draw (a) edge (d);
\draw (b) edge (c);
\end{scope}

\draw (-1,0) ellipse (2cm and 1.2cm);
\draw (1,0) ellipse (2cm and 1.2cm);

\begin{scope}[xshift=1.75cm,scale=0.5]
\node (a) at (-0.5,-0.5) {};
\node (b) at (0.5,-0.5) {};
\node (c) at (-0.5,0.5) {};
\node (d) at (0.5,0.5) {};
\draw (a) edge (b);
\draw (a) edge (c);
\draw (b) edge (d);
\end{scope}


\begin{scope}[xshift=-2cm,yshift=0cm,scale=0.5]
\node (a) at (-0.5,-0.5) {};
\node (b) at (0.5,-0.5) {};
\node (c) at (-0.5,0.5) {};
\node (d) at (0.5,0.5) {};
\node (e) at (1.25,0) {};
\node (f) at (-1.25,0) {};
\node (g) at (0,-1.25) {};
\node (h) at (0,1.25) {};
\draw (b) edge (a) edge (d) (c) edge (a) edge (d);
\draw (e) edge (b) edge (d);
\draw (f) edge (a) edge (c);
\draw (g) edge (a) edge (b);
\draw (h) edge (c) edge (d);
\end{scope}



\end{tikzpicture}

\caption{Relation between chordal
graphs, trivially perfect graphs, 
and $\sdgset$. In
 graph theory, $\sdgset$ is known as
the class of (upper/lower) \emph{bound graphs} \citep{Grant2006}.}
\label{fig:inclusions}
\end{figure}

This nice result begs the question whether a similar
characterization is also possible for $\sdgset$. As the following 
observation shows, that is not the case.

\begin{proposition}
Every graph $\sdgsym$ is an induced subgraph of some 
graph $\sdgsym' \in \sdgset$.
\label{proposition:induced:subgraph:MDG}
\end{proposition}

\begin{proof}
Take any graph ${\cal U}=(V,E)$ and construct a new graph
${\cal U}'$ as follows. For every edge $e=u-v$ in $\cal U$, add a new
node $v_e$ to $V$ and add edges $v_e-u$ and  $v_e-v$. Obviously
${\cal U}$ is an induced subgraph of ${\cal U}'$. To see 
that ${\cal U}'$ is in $\sdgset$, consider the
DAG $\cal G$ consisting of the nodes in ${\cal U}'$ and the edges $v \gets v_e \to u$ and 
for each newly added node in ${\cal U}'$. Then $\cal U$ is the marginal 
independence graph
of $\cal G$.
\end{proof}

The graph class characterization implies efficient
recognition algorithms for \sdgabbrv{}s.

\begin{theorem}
It can be tested in polynomial time whether a graph
$\sdgsym$ is a \sdgabbrv{}.
\label{thm:recognize:sdgs}
\end{theorem}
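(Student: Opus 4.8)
The plan is to reduce the recognition problem to the graph-theoretic characterization already in hand, namely Theorem~\ref{thm:graphcharact}, which states that $\sdgset$ is exactly the class of graphs in which every edge lies in a simplex. So the task becomes: given $\sdgsym=(\bV,E(\sdgsym))$, decide in polynomial time whether every edge is contained in a simplex. First I would recall the definition of a simplex (a maximal clique containing at least one simplicial node) and the equivalent local criterion for simpliciality quoted in the preliminaries: a node $v$ is simplicial iff $\textit{Bd}(v)\subseteq\textit{Bd}(w)$ for all $w\in N(v)$. The key observation is that this criterion is checkable directly and cheaply: for a fixed $v$ we compare its boundary against the boundary of each neighbor, which costs $O(|\bV|\cdot d(v))$ time, so finding \emph{all} simplicial nodes over the whole graph takes time polynomial in $|\bV|$ and $|E(\sdgsym)|$.

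The main steps, in order, would be as follows. (1) Compute the set $S$ of all simplicial nodes of $\sdgsym$ using the boundary-containment test above. (2) For each $v\in S$, its boundary $\textit{Bd}(v)$ is a clique and is in fact the unique simplex to which $v$ belongs; form the collection of candidate simplices $\{\textit{Bd}(v): v\in S\}$. (3) Mark an edge $x-y$ as ``covered'' if both $x$ and $y$ lie in some common simplex $\textit{Bd}(v)$ from this collection. (4) Accept iff every edge of $\sdgsym$ is covered. Each of these steps is a straightforward polynomial-time computation: step~(2) produces at most $|\bV|$ candidate cliques, and step~(3) tests, for each of the $|E(\sdgsym)|$ edges, membership of its endpoints in each candidate clique, giving an overall bound that is a small polynomial in the size of $\sdgsym$.

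Correctness follows from Theorem~\ref{thm:graphcharact} together with two facts stated in the preliminaries: every simplex is a maximal clique, and every simplicial node belongs to exactly one simplex. These guarantee that the candidate cliques $\textit{Bd}(v)$ enumerate \emph{all} simplices of $\sdgsym$, so the edge-coverage test in steps~(3)--(4) faithfully checks the condition ``every edge is contained in a simplex.'' Hence the algorithm accepts exactly the graphs in $\sdgset$, and it runs in polynomial time.

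The part requiring the most care is not the running-time bookkeeping but justifying that the simplices we enumerate are exhaustive, i.e. that no edge can be covered by a simplex that our procedure misses. This rests on the stated structural facts that each simplex is a maximal clique and contains at least one simplicial node, so that every simplex appears as $\textit{Bd}(v)$ for some simplicial $v$; one should be careful that distinct simplicial nodes may generate the same simplex (harmless) and that a clique containing \emph{no} simplicial node, while possibly maximal, is by definition not a simplex and must be excluded. Once this exhaustiveness is pinned down, the polynomial-time claim is immediate from the per-step cost bounds.
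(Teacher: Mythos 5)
Your proposal is correct and follows essentially the same route as the paper: both reduce recognition to the ``every edge lies in a simplex'' condition of Theorem~\ref{thm:graphcharact} and verify it by locating the simplicial nodes, whose boundaries are exactly the simplexes. The only difference is that the paper delegates the implementation to cited algorithms (bound-graph recognition in $\cO(n+sm)$ time, or fast simplicial-node detection), whereas you spell out an explicit, slightly less optimized but equally valid polynomial-time procedure, including the exhaustiveness argument that every simplex arises as $\textit{Bd}(v)$ for some simplicial $v$.
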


\begin{proof}
Verifying the graphical condition of Theorem~\ref{thm:graphcharact}
amounts to testing whether all edges reside within a simplex.
However, knowing that \sdgabbrv{}s are bound graphs,
we can apply an efficient algorithm for bound graph
recognition that
uses radix sort and simplex
elimination and achieves a runtime of $\cO(n+sm)$ 
\citep{Skowronska1984}, where $s \leq n$
is the number of simplexes in the graph. This 
is typically better than $\cO(n^3)$
because large $m$ implies small $s$ and vice versa.
Alternatively, we can apply known fast algorithms to find all
simplicial nodes \citep{kloks00}.
\end{proof}

\section{FINDING FAITHFUL POSETS}

\label{sec:consistency}

We now ask how to find faithful DAGs for simple
marginal independence graphs. We observed that 
marginal independence graphs 
cannot distinguish between transitively equivalent DAGs, so 
a perhaps more natural question is: which \emph{posets} are faithful
to a given graph? 
As pointed out before, we can obtain all DAGs from 
faithful posets in a unique manner by removing transitive edges. 
A further advantage of the poset representation will turn out
to be that the ``smallest'' and ``largest'' faithful
posets can be characterized uniquely (up to isomorphism);
as we shall also see, this is not as easy for DAGs,
except for marginal independence graphs in a certain subclass.

\subsection{Maximal Faithful Posets}

Our first aim is to characterize the ``upper bound'' of the
faithful set. That is, we wish to identify 
those posets for which no edge supergraph is also
faithful. We will show
that a construction described by \citet{Pearl1994}
solves exactly this problem. 

\begin{definition}
For a graph $\sdgsym=(\bV,\bE(\sdgsym))$, the \emph{sink graph}
$\cS(\sdgsym)=(\bV,\bE(\cS(\sdgsym)))$ is constructed as follows: for each edge $u - v$ in $\sdgsym$,
add to $\bE(\cS(\sdgsym))$: (1) an edge $u \to v$ if $\textrm{Bd}(u) \subsetneq \textrm{Bd}(v)$;
(2) an edge $u \gets v$  if $\textrm{Bd}(u) \supsetneq \textrm{Bd}(v)$;
(3) an edge $u - v$  if $\textrm{Bd}(u) = \textrm{Bd}(v)$.
\end{definition}

For instance, the sink graph of the graph in Figure~\ref{fig:marginalbutnotmarkov}a is 
the graph in Figure~\ref{fig:marginalbutnotmarkov}b.

\begin{definition}[\cite{Pearl1994}]
A \emph{sink orientation} of a graph  $\sdgsym$  is any DAG
obtained by replacing every undirected edge of $\cS(\sdgsym)$  
by a directed edge. 
\end{definition}

We first need to state the following.

\begin{lemma}
Every sink orientation of $\sdgsym$ is a poset.
\label{thm:sinkposet}
\end{lemma}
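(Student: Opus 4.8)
The plan is to recall that, by the paper's definition, a poset is precisely a transitive DAG, and that a sink orientation is already a DAG by construction; hence the only thing left to establish is transitivity. I will carry this out through one key monotonicity property that orienting edges by boundary inclusion guarantees: whenever $x \to y$ is an edge of a sink orientation, we have $\textit{Bd}(x) \subseteq \textit{Bd}(y)$. This holds because such an edge either comes from a directed edge of $\cS(\sdgsym)$, which is present only when $\textit{Bd}(x) \subsetneq \textit{Bd}(y)$, or from an undirected edge of $\cS(\sdgsym)$, which occurs only when $\textit{Bd}(x) = \textit{Bd}(y)$; in both cases the inclusion follows.

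Next I would reduce transitivity to the length-two case: a DAG is transitive iff every directed path $u \to w \to v$ is shortcut by an edge $u \to v$, the general statement following by a telescoping induction on path length. So fix a directed path $u \to w \to v$ in a sink orientation. From the monotonicity property, $\textit{Bd}(u) \subseteq \textit{Bd}(w) \subseteq \textit{Bd}(v)$, hence $\textit{Bd}(u) \subseteq \textit{Bd}(v)$. To see that $u$ and $v$ are in fact adjacent in $\sdgsym$, I would observe that the edge $u \to w$ means $u - w \in E(\sdgsym)$, so $u \in \textit{Bd}(w)$; combined with $\textit{Bd}(w) \subseteq \textit{Bd}(v)$ this yields $u \in \textit{Bd}(v)$, and since $u \neq v$ (the nodes of a path are distinct) we get $u \in N(v)$, i.e. $u - v \in E(\sdgsym)$. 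This adjacency step is the heart of the argument, and it falls out cleanly from the fact that a closed neighborhood contains its own node.

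It then remains to check that the edge between $u$ and $v$ is oriented as $u \to v$. If $\textit{Bd}(u) \subsetneq \textit{Bd}(v)$, then $\cS(\sdgsym)$ already directs it $u \to v$ by construction, and the orientation preserves this. The delicate case is $\textit{Bd}(u) = \textit{Bd}(v)$: then $\textit{Bd}(u) = \textit{Bd}(w) = \textit{Bd}(v)$ are all equal, so all three pairs are undirected edges of $\cS(\sdgsym)$ that the sink orientation must orient. Here I would invoke acyclicity, which is available precisely because a sink orientation is by definition a DAG: orienting the $\{u,v\}$ edge as $v \to u$ would close the cycle $u \to w \to v \to u$, so it must be oriented $u \to v$. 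I expect this equal-boundary case to be the main obstacle, since it is the only point at which the combinatorial boundary reasoning does not by itself fix the direction and one has to lean on the DAG hypothesis built into the definition of a sink orientation. Assembling the pieces, every directed path of length two is shortcut by a direct edge, so the sink orientation is transitive; being an acyclic transitive DAG, it is identical to its transitive closure and is therefore a poset.
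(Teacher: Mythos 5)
Your proof is correct and follows the same overall strategy as the paper's: reduce transitivity to length-two chains $u \to w \to v$ and use the monotonicity of closed neighborhoods along edges of the sink orientation. However, your execution is more careful than the paper's own proof in two places, and both refinements are genuine improvements. First, the paper asserts that a chain $x \to y \to z$ forces $\textrm{Bd}(x) \subsetneq \textrm{Bd}(z)$; this is not quite right, since if both edges arise from undirected edges of $\cS(\sdgsym)$ then all three boundaries coincide, and the boundary comparison alone does not determine the direction of the $u$--$v$ edge. You isolate exactly this equal-boundary case and resolve it by invoking the acyclicity built into the definition of a sink orientation --- the step the paper's proof silently skips. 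Second, to show that $u$ and $v$ are adjacent in $\sdgsym$ at all, the paper argues that otherwise ``$\cG$ would not be faithful,'' which leans on the faithfulness of sink orientations, a fact only established in the subsequent theorem; you instead derive adjacency directly from $u \in \textrm{Bd}(w) \subseteq \textrm{Bd}(v)$ together with $u \neq v$, which keeps the lemma self-contained. In short, your argument is not merely correct; it is a cleaner and more complete version of the intended proof.
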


\begin{proof}
Fix a sink orientation $\cG$ and consider any chain $x \to y \to z$.
By construction, this implies that $\textrm{Bd}(x) \subsetneq \textrm{Bd}(z)$.
Hence, if $x$ and $z$ are adjacent in the sink graph, 
then the only possible orientation is $x \to z$. There can be two
reasons why $x$ and $z$  are not adjacent in the 
sink graph: (1) They are not adjacent in $\sdgsym$. But then $\cG$ would not
be faithful, since $\cG$ implies the edge $x - z$. (2) The edge was 
not added to the sink graph. But this contradicts
$\textrm{Bd}(x) \subsetneq \textrm{Bd}(z)$.
\end{proof}

This Lemma allows us to strengthen Theorem~2 by
\citet{Pearl1994} in the sense that we can replace
``DAG'' by ``maximal poset'' (emphasized):

\begin{theorem}
$\cP$ is a \emph{maximal poset} faithful to $\sdgsym$
iff $\cP$  is a sink orientation of $\sdgsym$.
\label{thm:sinkor}
\end{theorem}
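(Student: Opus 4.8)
The plan is to pivot both implications on one structural fact, which I will call \emph{monotonicity}: if $a$ is an ancestor of $b$ in a faithful poset $\cP$, then $\textrm{Bd}(a)\subseteq\textrm{Bd}(b)$ in $\sdgsym$. Its proof is short: $a\to b$ makes $a$ a common ancestor of $a,b$, so $a-b\in E(\sdgsym)$ and $a\in\textrm{Bd}(b)$; and for any neighbour $c$ of $a$, a common ancestor $z$ of $a,c$ is also an ancestor of $b$, hence a common ancestor of $b,c$, so $c\in\textrm{Bd}(b)$. Two consequences drive everything else. Since $\cP$ is transitively closed, adjacency in $\cP$ coincides with the ancestor relation, so every edge of $\cP$ joins two $\sdgsym$-adjacent nodes with \emph{comparable} boundaries, oriented from the smaller to the (weakly) larger boundary --- exactly the rule defining $\cS(\sdgsym)$ on its $\subsetneq/\supsetneq$ cases. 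Thus the skeleton of any faithful poset sits inside the skeleton of $\cS(\sdgsym)$, and the two agree on orientation wherever boundaries are strictly comparable.

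For ``sink orientation $\Rightarrow$ maximal faithful poset'' I would use that sink orientations are posets (Lemma~\ref{thm:sinkposet}) and faithful, and then rule out a proper faithful extension. Suppose $\cP'\supsetneq\cG$ were a faithful poset with an extra edge $a\to b$. Faithfulness gives $a-b\in E(\sdgsym)$, and monotonicity applied to $\cP'$ gives $\textrm{Bd}(a)\subseteq\textrm{Bd}(b)$ (so incomparable boundaries are excluded outright). If the inclusion is strict, $\cS(\sdgsym)$ already contains $a\to b\in E(\cG)$, so the edge is not new; if the boundaries are equal, $\cG$ orients this edge, and since $a\to b\notin E(\cG)$ the opposite orientation lies in $\cG\subseteq\cP'$, yielding a $2$-cycle and contradicting that $\cP'$ is a DAG. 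Hence no proper faithful extension exists and $\cG$ is maximal.

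For the converse, let $\cP$ be a maximal faithful poset. By the consequences above its edges already sit inside $\cS(\sdgsym)$ with the correct orientations, so $\cP$ is a ``partial'' sink orientation, and it remains to force the missing $\cS(\sdgsym)$-edges into it. First I would exhibit a genuine sink orientation $\cG\supseteq\cP$: keep the forced directed edges of $\cS(\sdgsym)$, and on each class of nodes sharing a common boundary (where $\cS(\sdgsym)$ leaves edges undirected) orient by a linear extension of the order $\cP$ already induces there. Acyclicity holds because forced edges strictly increase boundary size while intra-class edges keep it constant, so any cycle would lie inside one class and contradict its linear extension. As $\cG$ is then a faithful poset containing $\cP$, maximality forces $\cP=\cG$, a sink orientation.

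The hard part will be proving that sink orientations are faithful in the first place, because --- perhaps surprisingly --- the skeleton of $\cS(\sdgsym)$ can be \emph{strictly smaller} than $\sdgsym$: when an edge $v-w$ has incomparable boundaries no edge is placed between $v$ and $w$ at all, so faithfulness cannot be read off the skeleton and must be recovered through common ancestors. The ``$\Rightarrow$'' half (a common ancestor forces an $\sdgsym$-edge) is easy: a directed path keeps boundaries non-decreasing, so from $z\in\textrm{Bd}(z)\subseteq\textrm{Bd}(v)$ and $w\in\textrm{Bd}(z)$ one gets $w\in\textrm{Bd}(v)$, i.e. $v-w\in E(\sdgsym)$. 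The ``$\Leftarrow$'' half (every $\sdgsym$-edge yields a common ancestor) is where I expect the main obstacle: using $\sdgsym\in\sdgset$ (Theorem~\ref{thm:graphcharact}), the edge $v-w$ lies in a simplex $S$; I would take the class of nodes whose boundary equals $S$ (all simplicial, containing the simplicial node of $S$), note that its $\cG$-source is in fact a global source --- its only possible in-neighbours have equal boundary and hence lie in the same class --- and argue that this source, being adjacent to all of $S$, is a common ancestor of both $v$ and $w$. Unless one simply cites Pearl's faithfulness result, this simplex argument is the step requiring the most care.
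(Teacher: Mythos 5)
Your proof is correct. It follows the same skeleton the paper implies --- sink orientations are faithful posets, and every faithful poset embeds into one via boundary monotonicity --- but it supplies the two pieces the paper does not. The paper justifies Theorem~\ref{thm:sinkor} only by combining Lemma~\ref{thm:sinkposet} with a citation of Pearl and Wermuth's Theorem~2 for the faithfulness of sink orientations, and the containment half is Lemma~\ref{thm:sinkor2}, whose proof is precisely your ``monotonicity'' fact that $a\to b$ in a faithful poset forces $\textrm{Bd}(a)\subseteq\textrm{Bd}(b)$. What you do differently: (i) you make the maximality argument explicit in both directions, in particular extending a maximal faithful poset to a full sink orientation by linearly extending the order it induces on each equal-boundary class (the paper only remarks that maximal posets arise from topological orderings of $\cS(\sdgsym)$); and (ii) you prove from scratch that sink orientations are faithful, correctly observing that this is the nontrivial step because the skeleton of $\cS(\sdgsym)$ can be a proper subgraph of $\sdgsym$, and recovering each edge $v-w$ through the source of the equal-boundary class of its simplex $S$, which is a parent of all of $S$. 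The citation-based route is shorter; yours is self-contained and exposes exactly where the simplex characterization of $\sdgset$ (Theorem~\ref{thm:graphcharact}) enters. One cosmetic remark: you do not actually need the class source $s_0$ to be a \emph{global} source of $\cG$; it suffices that every $\cS(\sdgsym)$-edge joining $s_0$ to another node of $S$ points away from $s_0$, which your two observations (simpliciality of $s_0$ plus its being the class source) already give.
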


The following is also not hard to see.

\begin{lemma}
For a \sdgabbrv{} $\sdgsym$, every DAG $\cG$ that is faithful to $\sdgsym$ is 
a subgraph of some sink orientation of $\sdgsym$.
\label{thm:sinkor2}
\end{lemma}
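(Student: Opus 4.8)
The plan is to reduce to posets and then produce a single topological order that simultaneously respects $\cG$ and the forced edges of the sink graph $\cS(\sdgsym)$. First I would replace $\cG$ by its transitive closure $\cP(\cG)$. Since transitive edges do not alter ancestral relations, $\cP(\cG)$ is again faithful to $\sdgsym$, and $\cG$ is an edge subgraph of $\cP(\cG)$; hence it suffices to embed $\cP(\cG)$ into some sink orientation.

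The key structural fact I would establish is that the ancestor order of $\cP(\cG)$ refines boundary inclusion in $\sdgsym$: if $u$ is a proper ancestor of $v$, then $\textrm{Bd}(u) \subseteq \textrm{Bd}(v)$. Indeed, any neighbour $w$ of $u$ shares a common ancestor $a$ with $u$; as $a$ is then also an ancestor of $v$, the nodes $w$ and $v$ share $a$, so $w \in \textrm{Bd}(v)$, and $u \in \textrm{Bd}(v)$ as well. Consequently every edge $u \to v$ of $\cP(\cG)$ corresponds to an edge $u - v$ of $\sdgsym$ (they share the ancestor $u$) and therefore occurs in $\cS(\sdgsym)$, where it is either the forced edge $u \to v$ (case $\textrm{Bd}(u) \subsetneq \textrm{Bd}(v)$) or an undirected edge (case $\textrm{Bd}(u) = \textrm{Bd}(v)$); it is never forced in the opposite direction.

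Next I would merge the two orders. Let $H$ be the directed graph on $\bV$ whose arcs are the ancestor pairs of $\cP(\cG)$ together with the forced arcs $u \to v$ of $\cS(\sdgsym)$. Every arc of $H$ satisfies $\textrm{Bd}(u) \subseteq \textrm{Bd}(v)$, so boundaries are non-decreasing along directed walks; a directed cycle would force all boundaries on it to be equal, which excludes the strictly increasing forced arcs and leaves only $\cP(\cG)$-arcs, contradicting acyclicity of the poset. Thus $H$ is acyclic and has a topological order $v_1, \dots, v_n$. I then define $\cW$ to keep all forced edges of $\cS(\sdgsym)$ and to orient each undirected edge from its earlier to its later endpoint in this order. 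Every edge of $\cW$ then points forward in the order, so $\cW$ is acyclic and hence a legitimate sink orientation of $\sdgsym$.

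Finally I would verify the containment $\cP(\cG) \subseteq \cW$. For an arc $u \to v$ of $\cP(\cG)$, $u$ precedes $v$ in the topological order, and in $\cS(\sdgsym)$ the pair $\{u,v\}$ is either the forced edge $u \to v$, kept verbatim, or an undirected edge oriented $u \to v$ by the rule; either way $u \to v \in \cW$. Hence $\cG \subseteq \cP(\cG) \subseteq \cW$, as required. The step I expect to be the main obstacle is the acyclicity of the merged relation $H$: one must exclude cycles that interleave ancestor arcs with forced boundary arcs, and the monotonicity-of-boundaries argument is what makes this work, collapsing any putative cycle onto a single boundary-equivalence class on which only poset arcs remain.
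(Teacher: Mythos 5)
Your proposal is correct, and its central observation is the same one the paper uses: if $u$ is an ancestor of $v$ in a faithful DAG, then every node sharing a common ancestor with $u$ also shares one with $v$, so $\textrm{Bd}(u)\subseteq\textrm{Bd}(v)$, and hence no edge of $\cG$ can be oriented against a forced edge of the sink graph. Where you go beyond the paper is in the second half: the paper stops after this edge-by-edge compatibility check, implicitly taking for granted that the undirected edges of $\cS(\sdgsym)$ not used by $\cG$ can be completed to an acyclic orientation agreeing with $\cG$. You make this explicit by merging the ancestor relation of $\cP(\cG)$ with the forced arcs of $\cS(\sdgsym)$, proving the merged relation acyclic via the boundary-monotonicity argument (any cycle would equalize all boundaries, expelling the strict forced arcs and reducing to a cycle in the poset), and then orienting everything along a single topological order. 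That extra step is genuinely needed for the statement as written -- a sink orientation is by definition a DAG containing an orientation of \emph{every} undirected edge of the sink graph -- so your version is the more complete argument; the cost is only the small additional bookkeeping of the merged order. One minor point worth stating explicitly: the passage from $\cG$ to $\cP(\cG)$ is harmless for the conclusion only because every edge of $\cP(\cG)$, not just of $\cG$, lands in $\cW$, which your argument does establish.
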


\begin{proof}
Obviously the skeleton of $\cG$ cannot contain edges that are not
in $\sdgsym$. So, suppose $x \to y$ is an edge in $\cG$ but conflicts
with the sink orientation; that is, the sink graph contains the 
edge $y \to x$. That is the case only if 
$\textrm{Bd}_\sdgsym(y)$ is a proper subset
of $\textrm{Bd}_\sdgsym(x)$. However, in the marginal independence
graph of $\cG$, any node that is adjacent to $x$ (has a
common ancestor) must also
be adjacent to $y$. Thus, the marginal independence graph
of $\cG$ cannot be $\sdgsym$.
\end{proof}

Every maximal faithful poset for $\sdgsym$ can be 
generated by first fixing a topological ordering 
of $\cS(\sdgsym)$ and then generating the DAG that corresponds
to that ordering, an idea that has also been mentioned by~\citet{Drton2008}.
This construction makes it obvious that all maximal faithful
posets are isomorphic.

For curiosity of the reader, 
we  note that $\cS(\sdgsym)$ can also be viewed 
as a \emph{complete partially directed acyclic
graph} (CPDAG), which represents the Markov equivalence
class of edge-maximal DAGs that are faithful with
$\sdgsym$. CPDAGs are used in the context of inferring
DAGs from data \citep{Spirtes2000,Chickering03,Kalisch2007}, 
which is only possible up to Markov 
equivalence.

\subsection{Minimal Faithful Posets}

A minimal faithful poset to $\sdgsym$ is one from which no further relations
can be deleted without entailing more independencies than are given
by $\sdgsym$.

\begin{definition}
Let $\sdgsym=(\bV,\bE)$ be a graph and let $\bI \subseteq \bV$ be an independent
set. Then $\bI_\sdgsym^\to$ is the poset consisting of the nodes in $\bI$, their
neighbors in $\sdgsym$, and directed edges $i \to j$ for each $i,j$ where $j \in N(i)$. 
\end{definition}

For example, Figure~\ref{fig:minposet}b shows the unique $\bI_\sdgsym^\to$ for the graph in
Figure~\ref{fig:minposet}a. 

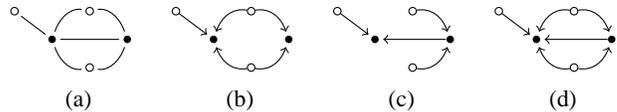
\begin{figure}
\newcommand{\thenodes}{
\tikzstyle{every node}=[circle,fill,inner sep=1pt];
\tikzstyle{every edge}=[bi]
\node [fill=none,draw=black] (a) at (0.5,0.25) {};
\node (b) at (1,0) {};
\node (c) at (2,0) {};
\node [fill=none,draw=black] (d) at (1.5,0.25) {};
\node [fill=none,draw=black] (e) at (1.5,-0.25) {};
}
\begin{subfigure}{.22\columnwidth}
\begin{tikzpicture}[yscale=1.5]
\thenodes
\draw (b) edge (a) edge  (c) edge [bend left=35]  (d) edge [bend right=35]  (e) 
(c) edge  [bend left=35]  (e) edge [bend right=35]  (d);
\end{tikzpicture}
\caption{}
\end{subfigure}\hfill
\begin{subfigure}{.22\columnwidth}
\begin{tikzpicture}[yscale=1.5]
\thenodes
\draw (b) edge [rdir] (a) edge [rdir,bend left=35] (d) edge [rdir,bend right=35] (e) 
(c) edge [rdir,bend left=35] (e) edge [rdir,bend right=35] (d);
\end{tikzpicture}
\caption{}
\end{subfigure}\hfill
\begin{subfigure}{.22\columnwidth}
\begin{tikzpicture}[yscale=1.5]
\thenodes
\draw (b) edge [rdir] (a) (c) edge [rdir,bend left=35] (e) edge [rdir,bend right=35] (d) 
(c) edge [dir] (b);
\end{tikzpicture}
\caption{}
\end{subfigure}\hfill
\begin{subfigure}{.22\columnwidth}
\begin{tikzpicture}[yscale=1.5]
\tikzstyle{every node}=[circle,fill,inner sep=1pt];
\tikzstyle{every edge}=[draw,-]
\thenodes
\draw (b) edge [rdir] (a) edge [rdir,bend left=35] (d) edge [rdir,bend right=35] (e) 
(c) edge [rdir,bend left=35] (e) edge [rdir,bend right=35] (d)
 (c) edge [dir] (b);
\end{tikzpicture}
\caption{}
\end{subfigure}
\caption{(a) A graph $\sdgsym$ with three simplicial nodes $\bI$
(open circles). (b) Its unique minimal faithful poset $\bI^\to_\sdgsym$.
(c,d) The unique faithful DAGs with minimum (c) or maximum (d)
numbers of edges. }
\label{fig:minposet}
\end{figure}

\begin{theorem}
Let $\cU=(\bV,\bE) \in \sdgsym$. Then a poset $\cP$ is
a minimal poset faithful to $\cU$ iff
$\cP = \bI_\cU^\to$ for a set $I$ consisting
of one simplicial vertex for each simplex.
\label{thm:minimalposet}
\end{theorem}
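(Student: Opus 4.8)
The plan is to route both directions through an analysis of the \emph{sources} (minimal elements) of faithful posets, which turn out to be completely pinned down by the simplex structure of $\cU$. Two elementary observations about the construction will be used repeatedly. First, $\bI_\cU^\to$ is genuinely a poset: since $\bI$ is independent, every $i \in \bI$ is a source and every other vertex is a sink, so all directed paths have length one and the graph is already transitively closed. Second, its vertex set is $\bigcup_{i \in \bI} \textrm{Bd}(i)$, which equals $\bV$ whenever $\bI$ meets every simplex, because by Theorem~\ref{thm:graphcharact} every vertex of a \sdgabbrv{} lies in some simplex.

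First I would establish faithfulness of $\bI_\cU^\to$ when $\bI$ picks one simplicial vertex per simplex. The key computation is that in $\bI_\cU^\to$ two vertices $u,w$ share a common ancestor iff some $i \in \bI$ has $u,w \in \textrm{Bd}(i)$; this follows because each $i$ is a source whose only ancestor is itself, while every vertex outside $\bI$ is a sink. As $\textrm{Bd}(i)$ is exactly the simplex of $i$ and $\bI$ selects a representative of every simplex, this condition holds precisely when $u$ and $w$ lie in a common simplex, which by Theorem~\ref{thm:graphcharact} is equivalent to $u - w \in \bE$. I must also check that the construction is well defined, i.e.\ that one-simplicial-vertex-per-simplex yields an independent set: two such vertices, if adjacent, would place one inside the other's unique simplex, contradicting that they were drawn from distinct simplexes.

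The heart of the argument is the converse: every minimal faithful poset $\cP$ equals some $\bI_\cU^\to$. I would first show that the sources of \emph{any} faithful poset are forced. (i) Each source $s$ is simplicial, since its only ancestor is itself, so each $\cU$-neighbor of $s$ must be a descendant of $s$; descendants of $s$ are pairwise adjacent in $\cU$, hence $\textrm{Bd}(s)$ is a clique. (ii) Faithfulness moreover forces the edge $s \to v$ for every $v \in N(s)$, so taking $\bI$ to be the source set gives $\bI_\cU^\to$ as an edge-subgraph of $\cP$. (iii) Every simplex contains exactly one source: at least one, by chasing a simplicial vertex $s_0$ of the simplex back to a source ancestor $a$, which is $\cU$-adjacent to $s_0$ and therefore lies in $s_0$'s simplex; at most one, because two sources in a common simplex would be adjacent, forcing an edge between them and contradicting that both are sources. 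Thus $\bI$ is an independent transversal of the simplexes, $\bI_\cU^\to$ is a faithful poset sitting inside $\cP$, and minimality of $\cP$ forces $\cP = \bI_\cU^\to$.

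Finally I would verify that each $\bI_\cU^\to$ is itself minimal, closing the equivalence. Deleting any edge $i \to v$ destroys faithfulness: the only candidate common ancestor of $i$ and $v$ is $i$ (a source), and the only directed $i$-to-$v$ path is that single edge (every neighbor vertex being a sink), so removing it turns $i - v$ into an independence absent from $\cU$. The main obstacle I anticipate is part (iii) of the source analysis --- that each simplex is met exactly once --- since it is the only point where faithfulness must be combined with the global combinatorics of simplicial vertices and their unique simplexes, rather than reasoned out locally at a single source.
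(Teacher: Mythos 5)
Your proof is correct, and while your treatment of the forward direction (faithfulness and minimality of $\bI_\cU^\to$) matches the paper's in substance, your converse takes a genuinely different route. The paper reasons directly about minimal posets: it first shows they contain no directed path of length two (a transitive edge could otherwise be deleted without changing the marginal independence graph), and then uses the resulting height-one structure to argue that each simplex must have a single apex node that parents all other simplex members and is itself simplicial. You instead prove a stronger statement about \emph{arbitrary} faithful posets: their sources are simplicial, form an independent transversal of the simplexes, and already force $\bI_\cU^\to$ (for $\bI$ the source set) to sit inside the poset as an edge-subgraph; minimality then collapses $\cP$ onto that subgraph. Your route buys two things. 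First, it sidesteps a delicate point in the paper's argument --- deleting a single transitive edge $y \to z$ from a poset need not leave a transitively closed graph (consider $y \to w \to z$ with $w$ distinct from $x$), so ``delete it and contradict minimality'' needs slightly more care than the paper gives it, whereas you never delete anything from $\cP$. Second, your source analysis is essentially condition (2) of Proposition~\ref{prop:dagcharact}, so that characterization comes nearly for free. What the paper's route buys is a direct local description of minimal posets (height one, one apex per simplex) without quantifying over all faithful posets. One small caveat: your last step reads minimality as ``no faithful poset is a proper edge-subgraph of $\cP$''; this is the natural reading of the paper's definition and consistent with its usage, but since the text phrases minimality in terms of deleting relations, it is worth stating that reading explicitly before invoking it.
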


\begin{proof}
We first show that if $\bI$ is a set consisting
of one simplicial node for each simplex, then
$\bI_\sdgsym^\to$ is a minimal faithful poset.
Every edge $e \in \bE(\sdgsym)$ 
resides in a simplex, so it is either adjacent to $\bI$ or both of its
endpoints are adjacent to some $i \in \bI$. In both cases, $\bI_\sdgsym^\to$
implies $e$. Also  $\bI_\sdgsym^\to$ does not imply more edges than are 
in $\sdgsym$. Now, suppose we delete an edge $i \to x$ 
from $\bI_\sdgsym^\to$. This edge must exist in $\sdgsym$, else $i$ was not 
simplicial. But now $\bI_\sdgsym^\to$ no longer implies this edge. Thus,
$\bI_\sdgsym^\to$ is minimal.
Second, assume that $\cP$ is a minimal faithful poset. Assume $\cP$
would contain a sequence of two directed edges $x \to y \to z$.
Then $\cP$ would also contain the edge $x \to z$. But then $y \to z$ 
could be deleted from $\cP$ without changing the dependency graph,
and $\cP$ was not minimal. So, $\cP$ does not contain any directed
path of length more than 1. Next, observe that for each simplex 
in $\cU$, the nodes must all have a common ancestor in $\cP$.
Without paths of length $>1$, this 
is only possible if one node $i$ in the simplex is a parent of all other 
nodes, and there are no edges among the child nodes of $i$.
Finally, each such $i$ must be a simplicial node in
$\sdgsym$; otherwise, it
would reside in two or more simplexes, and would have to be 
the unique parent in those simplexes. But then the children of $i$ 
would form a single simplex in $\sdgsym$.
\end{proof}

Like the maximal posets, all minimal posets are thus isomorphic.
We point out that the minimal posets contain no transitive
edges and therefore, they are also edge-minimal faithful DAGs.
However, this does not imply that minimal posets
have the smallest possible number
of edges amongst all faithful DAGs (Figure~\ref{fig:minposet}).
There appears to be no straightforward characterization of the 
DAGs with the smallest number of edges for marginal independence graphs
in general. However, a beautiful one exists for the 
subclass of trivially perfect graphs.

\begin{definition}
A \emph{tree poset} is a poset whose transitive reduction is a tree
(with edges pointing towards the root).
\end{definition}

\begin{theorem}
A connected \sdgabbrv{} $\sdgsym$ has a faithful tree poset iff it is
trivially perfect.
\end{theorem}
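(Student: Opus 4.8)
The plan is to reduce the statement to the classical fact that the trivially perfect graphs are exactly the comparability graphs of rooted forests, connecting the two via a single structural observation about tree posets. That observation is an explicit description of the marginal independence graph of a tree poset $\cP$. Let $T$ be the transitive reduction of $\cP$, a rooted tree with all edges directed towards the root. A node $w$ is a common ancestor of $x$ and $y$ in $\cP$ iff there are directed paths $w \to \cdots \to x$ and $w \to \cdots \to y$; since every edge of $T$ points towards the root, this says precisely that both $x$ and $y$ lie on the unique path from $w$ to the root. Such a $w$ exists iff $x$ and $y$ are comparable in $T$: if $x$ is an ancestor of $y$ then $w=y$ works, whereas if $x$ and $y$ are incomparable their subtrees are disjoint and admit no common descendant. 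Hence, by Definition~\ref{definition:dependency:graph}, the marginal independence graph of $\cP$ is exactly the comparability graph of $T$, i.e. $x-y$ iff one of $x,y$ is an ancestor of the other in $T$.

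For the ``only if'' direction I would use this identification to show that $\sdgsym$ is $\{P_4,C_4\}$-free, which is the definition of trivially perfect (Theorem~\ref{thm:triviallyperfect}). Both forbidden subgraphs fall out of a short case analysis on tree-comparability. For a putative induced $C_4$ on $a-b-c-d-a$, the non-edge $b\not\sim d$ together with the edges $a\sim b$ and $a\sim d$ forces $a$ to be a common ancestor of $b$ and $d$ (every other configuration of these two order relations would make $b$ and $d$ comparable); symmetrically $c$ is also a common ancestor of $b$ and $d$, so $a$ and $c$ both lie on the path from $b$ to the root and are therefore comparable, contradicting $a\not\sim c$. An induced $P_4$ is ruled out by the analogous analysis of its three edges. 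Since $\sdgsym$ is already assumed to be a \sdgabbrv{}, i.e. a bound graph, an equivalent route is to establish only chordality and invoke that trivially perfect $=$ bound $\cap$ chordal.

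For the ``if'' direction I would construct a faithful tree poset by the standard universal-vertex decomposition. A connected $\{P_4,C_4\}$-free graph has a universal vertex $u$; I take $u$ as the root, delete it, recurse on each connected component of $\sdgsym-u$ (each an induced, hence trivially perfect, subgraph), and attach the roots returned by the recursion as the children of $u$. Orienting every edge of the resulting tree towards $u$ yields a tree poset whose comparability graph, by the observation above, is exactly $\sdgsym$; connectedness is what guarantees a single root rather than a forest. The main obstacle I expect lies precisely here: justifying that a connected trivially perfect graph always has a universal vertex and that the recursion rebuilds $\sdgsym$ exactly rather than a sub- or supergraph. I would lean on the structure theory of trivially perfect graphs surveyed in \citet{Brandstaedt1999}; alternatively, the decomposition can be driven by the simplicial structure guaranteed by Theorem~\ref{thm:graphcharact}, with the simplicial vertices of $\sdgsym$ serving as the leaves of $T$.
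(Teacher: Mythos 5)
Your proposal takes essentially the same route as the paper's proof: both identify the marginal independence (bound) graph of a tree poset with the comparability graph of the underlying rooted tree, and then invoke the fact that these comparability graphs are exactly the trivially perfect graphs, with the universal-vertex recursion supplying the converse construction. The only difference is that you spell out the $P_4$/$C_4$ case analysis and the recursion details that the paper delegates to the citation of \citet{Wolk1965}, so the argument is correct and complete.
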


\begin{proof}
The bound graph of a tree poset is 
identical to its \emph{comparability graph}
\citep{Brandstaedt1999}, which is the 
skeleton of the poset. Comparability graphs of tree posets
coincide with trivially perfect graphs \citep{Wolk1965}.
\end{proof}

Since no connected graph on $n$ nodes can have fewer 
edges than the transitive reduction of a tree poset on the 
same nodes (i.e., $n-1$), tree posets coincide with 
faithful DAGs having the smallest possible number of edges.

How do we construct a tree for a given trivially perfect graph? 
Every such graph must have a \emph{central point}, which is
a node that is adjacent to all other nodes. We set this node as 
the sink of the tree, and continue recursively with the subgraphs
obtained after removing the central point. Each subgraph is 
also trivially perfect and can thus be oriented into a tree.
After we are done, we link the sinks of the trees of the subgraphs
to the original central point to obtain the full tree
\citep{Wolk1965}.


\section{FINDING FAITHFUL DAGS}

\label{sec:algo}

If a given marginal independence graph $\sdgsym$ admits
faithful DAG models,
then it is of interest to enumerate these.
A trivial enumeration procedure 
is the following: start with the
sink graph of $\sdgsym$, choose an arbitrary edge $e$, and form
all 2 or 3 subgraphs obtained by keeping $e$ (if it 
is directed), orienting $e$ (if it is undirected), or deleting
it. Apply the procedure recursively to these subgraphs.
During the recursion, do not touch edges that have been
previously chosen. If the current graph is a DAG that
is faithful to $\sdgsym$, output it; otherwise, 
stop the recursion.

However, we can do better by exploiting the results
of the previous section, which will allow us to derive
enumeration algorithms that generate representations
of multiple DAGs at each step.

\subsection{Enumeration of Faithful DAGs}

Having characterized the maximal and minimal faithful 
posets, we are now ready to
construct an enumeration procedure for all DAGs that
are faithful to a given graph.
We first state the following combination of
Theorem~\ref{thm:sinkor} and Theorem~\ref{thm:minimalposet}. 

\begin{proposition}
A DAG $\cG=(\bV,\bE(\cG))$ is faithful to a \sdgabbrv{} $\sdgsym=(\bV,\bE(\sdgsym))$ 
iff  (1) $\cG$ is an edge subgraph of some sink orientation of
$\sdgsym$ and (2) the transitive closure of $\cG$ is an 
edge supergraph of $\bI_\sdgsym^\to$ for some node
set $I$ consisting of one simplicial node for each simplex.
\label{prop:dagcharact}
\end{proposition}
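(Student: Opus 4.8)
The plan is to prove the biconditional by chaining together the two boundary characterizations already established. The key observation is that Proposition~\ref{prop:dagcharact} simply says: a faithful DAG must lie ``between'' a minimal faithful poset and a maximal faithful poset, in the appropriate edge-subgraph/supergraph sense. First I would fix notation: let $\cG$ be a DAG on $\bV$ and let $\cP(\cG)$ denote its transitive closure, recalling from the earlier proposition that $\cG$ and $\cP(\cG)$ have identical marginal independence graphs. This lets me freely pass between $\cG$ and $\cP(\cG)$ when reasoning about faithfulness.

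For the forward direction, assume $\cG$ is faithful to $\sdgsym$. Condition~(1) is exactly Lemma~\ref{thm:sinkor2}, which states that every faithful DAG is a subgraph of some sink orientation; I would simply cite it. For condition~(2), I would argue that $\cP(\cG)$ is itself a faithful poset (by Proposition on transitive closure plus the faithfulness of $\cG$). Now $\cP(\cG)$ need not be minimal, but I would invoke Theorem~\ref{thm:minimalposet}: among the minimal faithful posets, which all have the form $\bI_\sdgsym^\to$, I want to find one that $\cP(\cG)$ contains as an edge subgraph. The natural candidate is built by selecting, for each simplex of $\sdgsym$, a simplicial node $i$ that is a \emph{source} (minimal element) within that simplex in $\cP(\cG)$; then every edge $i \to j$ of $\bI_\sdgsym^\to$ must appear in $\cP(\cG)$ because $j \in N(i)$ in $\sdgsym$ forces a common ancestor, and the transitive closure of a poset with a source in the simplex realizes that common ancestry as a direct edge from $i$.

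For the converse, assume (1) and (2). From (1) and Lemma~\ref{thm:sinkor}, $\cG$ is an edge subgraph of a poset (the sink orientation), so $\cG$ is acyclic and its marginal independence graph implies \emph{no more} adjacencies than $\sdgsym$: any common ancestor in $\cG$ yields a common ancestor in the sink orientation, hence an edge of $\sdgsym$. From (2), the marginal independence graph of $\cG$ implies \emph{at least} all edges of $\sdgsym$: since $\cP(\cG) \supseteq \bI_\sdgsym^\to$ and $\bI_\sdgsym^\to$ is faithful (Theorem~\ref{thm:minimalposet}), every edge of $\sdgsym$ is already a common-ancestor relation in $\bI_\sdgsym^\to$ and therefore in $\cP(\cG)$, which shares its marginal independence graph with $\cG$. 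The two containments together force the marginal independence graph of $\cG$ to equal $\sdgsym$, i.e.\ $\cG$ is faithful.

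The main obstacle I anticipate is the source-selection argument in the forward direction: I must verify that a single simplicial node can be chosen per simplex so that all the required edges of some $\bI_\sdgsym^\to$ are present in $\cP(\cG)$, and that distinct simplexes do not make conflicting demands. A simplicial node belongs to exactly one simplex (stated in the Preliminaries), so the choices are independent across simplexes, which should resolve the conflict worry. The delicate point is guaranteeing that within each simplex the chosen node is a source of that simplex in $\cP(\cG)$ whose closure dominates the whole simplex; here I would lean on the fact that the simplex's nodes share a common ancestor in $\cP(\cG)$ and argue that one can take this ancestor, or a simplicial descendant of it, as the representative. Getting this selection to be simultaneously valid and simplicial is the step that needs the most care, but Theorem~\ref{thm:minimalposet}'s explicit description of minimal posets gives exactly the structural handle needed.
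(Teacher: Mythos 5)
Your overall route is the right one and matches what the paper intends (the paper states this proposition as a ``combination of Theorem~\ref{thm:sinkor} and Theorem~\ref{thm:minimalposet}'' without writing out a proof): cite Lemma~\ref{thm:sinkor2} for condition (1), use Theorem~\ref{thm:minimalposet} for condition (2), and run the two containments against each other for the converse. The converse direction as you present it is sound. The one place where your argument does not yet close -- and which you correctly flag as the delicate step -- is the existence of a valid set $\bI$ in the forward direction. As written, your justification is circular: you select a simplicial node $i$ that is minimal within its simplex in $\cP(\cG)$ and then assert that ``the transitive closure of a poset with a source in the simplex realizes that common ancestry as a direct edge from $i$,'' but minimality of $i$ \emph{within the simplex} does not by itself make $i$ a common ancestor of the other simplex members, and the prior claim that all nodes of a simplex share a single common ancestor is exactly what needs proof (pairwise common ancestors do not in general yield a joint one).

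The gap is fillable with one clean lemma, which is what your sketch is missing: \emph{if $\cG$ is faithful to $\sdgsym$, then every source $s$ of $\cG$ is a simplicial node of $\sdgsym$ with $\textrm{Bd}(s)=\textit{desc}(s)$, and every simplex of $\sdgsym$ contains such a source.} Indeed, for a source $s$ the only ancestor of $s$ is $s$ itself, so $w$ is adjacent to $s$ in $\sdgsym$ iff $s$ is an ancestor of $w$; hence $\textrm{Bd}(s)=\textit{desc}(s)$, which is a clique (all its members share the ancestor $s$), so $s$ is simplicial. Given a simplex $S$ with simplicial node $v$, pick any source $s$ of $\cG$ that is an ancestor of $v$; then $s\in\textrm{Bd}(v)=S$, and since $s$ and $v$ are adjacent simplicial nodes their boundaries coincide, so $\textit{desc}(s)=\textrm{Bd}(s)=S$. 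Choosing this $s$ for each simplex (distinct simplexes force distinct choices, since a simplicial node lies in exactly one simplex, and the resulting set is automatically independent) gives an $\bI$ with $\bI_\sdgsym^\to\subseteq\cP(\cG)$. Note this also rescues your ``minimal element'' criterion after the fact: a simplicial node of $S$ that is minimal within $S$ must in fact be a source of all of $\cG$, because any proper ancestor of it would lie inside its boundary $S$. Without this lemma, however, the forward direction of condition (2) is not established.
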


From this observation, we can derive our first construction
procedure for faithful DAGs.

\begin{proposition}
A DAG $\cG$ is faithful to a \sdgabbrv{} $\sdgsym=(\bV,\bE(\sdgsym))$ iff
it can be generated by the following steps.
(1) Pick any set $\bI \subseteq \bV$
consisting of one simplicial node for each simplex.
(2) Generate any DAG on the nodes $\bV\setminus\bI$
that is an edge subgraph of some sink orientation of $\sdgsym$. 
(3) Add any subset of edges from $\bI_\sdgsym^\to$ such
that the transitive closure of the resulting graph 
contains all edges of $\bI_\sdgsym^\to$.
\label{prop:dagalgorithm}
\end{proposition}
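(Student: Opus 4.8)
The plan is to prove the biconditional in Proposition~\ref{prop:dagalgorithm} by reducing it to the already-established characterization in Proposition~\ref{prop:dagcharact}. The key observation is that the three-step generation procedure is essentially a constructive repackaging of conditions (1) and (2) of Proposition~\ref{prop:dagcharact}, with the node set $\bI$ of simplicial representatives pulled out as an explicit first choice. So I would not try to reason about faithfulness from scratch (via common ancestors), but instead show that the set of DAGs producible by steps (1)--(3) coincides exactly with the set of DAGs satisfying conditions (1)--(2) of the earlier proposition.

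For the ``only if'' direction (every generated DAG is faithful), I would fix an output $\cG$ of the procedure arising from some choice of $\bI$, some edge subgraph $H$ on $\bV \setminus \bI$, and some subset $S \subseteq \bE(\bI_\sdgsym^\to)$. I need to verify the two conditions of Proposition~\ref{prop:dagcharact}. Condition (2) is immediate by the design of step (3): we require precisely that the transitive closure of $\cG$ contain all edges of $\bI_\sdgsym^\to$, which is exactly condition (2). For condition (1), I must check that $\cG$ is an edge subgraph of \emph{some} sink orientation of $\sdgsym$. The edges of $\cG$ on $\bV\setminus\bI$ lie in a sink orientation restricted to those nodes by step (2); the added edges $i \to j$ from $\bI_\sdgsym^\to$ point from a simplicial node $i$ to its neighbors $j$. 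Here I need the fact (essentially Lemma~\ref{thm:sinkor2}'s mechanism) that a simplicial node $i$ satisfies $\textrm{Bd}(i) \subseteq \textrm{Bd}(j)$ for all neighbors $j$, so the sink graph orients every such edge as $i \to j$ (or leaves it undirected when boundaries are equal, in which case the orientation $i\to j$ is a valid choice). Thus the added edges are consistent with a sink orientation, and the two partial orientations can be combined into one global sink orientation of which $\cG$ is an edge subgraph.

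For the ``if'' direction, I would start from an arbitrary faithful DAG $\cG$ and show it is reachable by the procedure. By Proposition~\ref{prop:dagcharact}, there is a simplicial-representative set $\bI$ with $\bE(\bI_\sdgsym^\to)$ contained in the transitive closure of $\cG$, and $\cG$ is an edge subgraph of a sink orientation. I would then set step (1) to choose this $\bI$, let step (2) reproduce the induced subgraph of $\cG$ on $\bV\setminus\bI$ (which is an edge subgraph of the same sink orientation restricted to those nodes), and let step (3) add exactly the edges of $\cG$ incident to $\bI$. I must argue that these incident edges are indeed a subset of $\bE(\bI_\sdgsym^\to)$ and that the transitive-closure condition of step (3) is met; the latter follows because the transitive closure of the reconstructed $\cG$ agrees with that of the original $\cG$, which already contains $\bE(\bI_\sdgsym^\to)$.

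I expect the main obstacle to be the bookkeeping in showing the edges incident to $\bI$ in $\cG$ actually come from $\bI_\sdgsym^\to$ rather than pointing the ``wrong way'' or connecting two $\bI$-nodes. Since $\bI$ is an independent set of simplicial nodes, no edge of $\sdgsym$ joins two members of $\bI$, so edges incident to $\bI$ must have exactly one endpoint in $\bI$; and because each such node is simplicial, the sink-orientation constraint forces these edges to be oriented outward from $\bI$, matching $\bI_\sdgsym^\to$. Making this consistency argument airtight -- in particular handling the case of equal boundaries where $\cS(\sdgsym)$ leaves an edge undirected -- is the delicate step, but it is localized and follows the same reasoning already used in the proofs of Lemma~\ref{thm:sinkor2} and Theorem~\ref{thm:minimalposet}.
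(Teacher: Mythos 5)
Your proposal is correct and matches the paper's (implicit) argument: the paper states Proposition~\ref{prop:dagalgorithm} as a direct consequence of Proposition~\ref{prop:dagcharact} without a separate proof, and you simply flesh out that reduction in both directions. One small correction to the "delicate step" you flag: when $\textrm{Bd}(i)=\textrm{Bd}(j)$ the sink graph leaves $i-j$ undirected, so a sink orientation alone does \emph{not} force the edge outward from $\bI$; what forces it is that the transitive closure of $\cG$ already contains $i\to j$ (by the choice of $\bI$ in Proposition~\ref{prop:dagcharact}), so an edge $j\to i$ in $\cG$ would contradict acyclicity.
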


While step (3) may seem ambiguous, 
Figure~\ref{fig:dagalgorithm} illustrates 
that 
after step (2), the edges from 
$\bI_\sdgsym^\to$ decompose nicely into \emph{mandatory}
and \emph{optional} ones. This means that we can
in fact stop the construction procedure after step (2)
and output a ``graph pattern'', in which some edges 
are marked as optional. This is helpful in light
of the potentially huge space of faithful models,
because every graph pattern can represent an exponential 
number  of DAGs.

\begin{figure}

\tikzstyle{every node}=[circle,fill,inner sep=1pt]

\tikzstyle{every edge}=[bi]
\hfill
\begin{subfigure}{.16\columnwidth}
\centering
\begin{tikzpicture}[rotate=-90,xscale=1.2,yscale=.5]
\node (a) at (0.5,0) {};
\node (b) at (1,0) {};
\node (c) at (1.5,0) {};
\node (d) at (1,1) {};
\node (e) at (1,-1) {};
\draw (e) edge [bi,bend left=90]  (d);
\draw (a) edge (b) edge (d) edge (e)
(c) edge (b) edge (d) edge (e)
(b) edge (d) edge (e);
\end{tikzpicture}%
\caption{}
\end{subfigure}
\newcommand{\bild}[1]{%
\begin{tikzpicture}[rotate=-90,xscale=1.2,yscale=.5]
\tikzstyle{every edge}=[dir]
\node [fill=none,draw=black] (a) at (0.5,0) {};
\node (b) at (1,0) {};
\node [fill=none,draw=black] (c) at (1.5,0) {};
\node (d) at (1,1) {};
\node (e) at (1,-1) {};
\draw [white] (e) edge [bi,bend left=90]  (d);
\draw #1;
\end{tikzpicture}%
}
\hfill
\begin{subfigure}{.64\columnwidth}

\centering
\bild{
(a) edge (b) edge (d) edge (e)
(c) edge (b) edge (d) edge (e)
} \hfill
\bild{
(a) edge [dashed] (b) edge (d) edge (e)
(c) edge [dashed]  (b) edge (d) edge (e)
(d) edge [thick] (b)
} \hfill
\bild{
(a) edge (b) edge (d) edge [dashed]  (e)
(c) edge (b) edge (d) edge [dashed]  (e)
(b) edge [thick] (e)
} \hfill
\bild{
(a) edge [dashed]   (b) edge (d) edge [dashed]  (e)
(c) edge [dashed]   (b) edge (d) edge [dashed]  (e)
(d) edge [thick] (b)
(b) edge [thick] (e)
}

\medskip

\bild{
(a) edge (b) edge (d) edge [dashed] (e)
(c) edge (b) edge (d) edge [dashed](e)
(d) edge [thick,bend right=90]  (e)
} \hfill
\bild{
(a) edge [dashed] (b) edge (d) edge[dashed] (e)
(c) edge [dashed] (b) edge (d) edge [dashed](e)
(d) edge [thick,bend right=90]  (e)
(d) edge [thick] (b)} \hfill
\bild{
(a) edge (b) edge (d) edge [dashed](e)
(c) edge (b) edge (d) edge [dashed](e)
(d) edge [thick,bend right=90]  (e)
(b) edge [thick] (e)
} \hfill
\bild{
(a) edge [dashed] (b) edge (d) edge [dashed](e)
(c) edge [dashed] (b) edge (d) edge [dashed](e)
(d) edge [thick,bend right=90]  (e)
(d) edge [thick] (b)
(b) edge [thick] (e)
}
\caption{}
\end{subfigure}
\hfill \null

\caption{Example of the procedure in 
Proposition~\ref{prop:dagalgorithm} that, given a
\sdgabbrv{} (a), enumerates all faithful DAGs (b).
For brevity,
only the graphs that correspond to a fixed
topological ordering are displayed.  Only
one set $\bI$ (open circles) can be
chosen in step (1). Thick edges and filled nodes 
highlight the DAG $\cG$.
Mandatory edges (solid) link $\bI$ to the sources of $\cG$; 
if any such edge was absent, one of the relationships in 
the poset $\bI_\sdgsym^\to$ would be missing.
Optional edges (dashed) are transitively implied
from the mandatory ones and $\cG$.
}
\label{fig:dagalgorithm}
\end{figure}

\subsection{Enumeration of Faithful Posets}

The DAGs resulting from the procedure in
Proposition~\ref{prop:dagalgorithm} are in general
redundant because no care is taken to avoid generating
transitive edges. By combining Propositions~\ref{prop:dagcharact}
and~\ref{prop:dagalgorithm},
we obtain an algorithm that
generates sparse, non-redundant representations of 
the faithful DAGs.

\begin{theorem}
A poset $\cP$ is faithful to $\sdgsym=(\bV,\bE(\sdgsym))$ iff
it can be generated by the following steps.
(1) Pick any set $\bI \subseteq \bV$
consisting of one simplicial node for each simplex.
(2) Generate a poset $\cP$ on the nodes $\bV\setminus\bI$
that is an edge subgraph of some sink orientation of $\sdgsym$. 
(3) Add $\bI_\sdgsym^\to$ to $\cP$.
\label{th:posetenum}
\end{theorem}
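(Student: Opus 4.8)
The plan is to derive Theorem~\ref{th:posetenum} from Propositions~\ref{prop:dagcharact} and~\ref{prop:dagalgorithm} by specializing them to transitively closed DAGs. Since a poset equals its own transitive closure, the clause ``the transitive closure of $\cG$ is an edge supergraph of $\bI_\sdgsym^\to$'' in Proposition~\ref{prop:dagcharact} collapses to $\bI_\sdgsym^\to \subseteq \cP$. Thus a faithful poset is precisely a poset $\cP$ sandwiched as $\bI_\sdgsym^\to \subseteq \cP \subseteq \cS^{\mathrm{or}}$ for some sink orientation $\cS^{\mathrm{or}}$ of $\sdgsym$ and some admissible $\bI$ (one simplicial node per simplex). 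I would prove the two directions of the stated equivalence against this sandwich, recalling that such an $\bI$ is automatically an independent set, so $\bI_\sdgsym^\to$ is well defined and its nodes are sources.

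For the \emph{if} direction, let $\cP'$ be the poset produced in step~(2) and set $\cG=\cP'\cup\bI_\sdgsym^\to$. Faithfulness of $\cG$ is immediate from Proposition~\ref{prop:dagalgorithm}: step~(1) supplies $\bI$, $\cP'$ is a DAG on $\bV\setminus\bI$ that is an edge subgraph of a sink orientation, and adding \emph{all} of $\bI_\sdgsym^\to$ trivially makes the transitive closure contain $\bI_\sdgsym^\to$. The substantive claim is that $\cG$ is again a poset, i.e.\ transitively closed. I would verify this on each composable pair $u\to v\to w$ in $\cG$. Because the nodes of $\bI$ are sources, any endpoint of an edge lies in $\bV\setminus\bI$; so if $u\notin\bI$ the whole chain lies in $\cP'$ and transitivity of the poset $\cP'$ yields $u\to w$. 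The only real case is $u=i\in\bI$, where $v\in N(i)$ and $w$ is a $\cP'$-child of $v$, and I must show $w\in N(i)$ so that $i\to w\in\bI_\sdgsym^\to\subseteq\cG$.

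This is the main obstacle, and it rests on a boundary-monotonicity observation. Every edge $j\to k$ of a sink orientation satisfies $\textrm{Bd}(j)\subseteq\textrm{Bd}(k)$ (properly if it came from a directed sink-graph edge, with equality if it came from an oriented undirected one), so boundaries are non-decreasing along directed paths. Since $i$ is simplicial, $\textrm{Bd}(i)\subseteq\textrm{Bd}(v)$ for its neighbour $v$, and as $\cP'\subseteq\cS^{\mathrm{or}}$ we obtain $\textrm{Bd}(i)\subseteq\textrm{Bd}(v)\subseteq\textrm{Bd}(w)$; hence $i\in\textrm{Bd}(w)$, so $w$ is adjacent to $i$ and, being distinct from $i$, lies in $N(i)$, as required. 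Applied along arbitrary paths out of $i$, the same monotonicity shows more generally that the out-neighbourhood of each source $i$ is closed under taking $\cP'$-descendants, which is exactly the condition needed for transitive closure.

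For the \emph{only if} direction, start from a faithful poset $\cP$ and use the sandwich $\bI_\sdgsym^\to\subseteq\cP\subseteq\cS^{\mathrm{or}}$ above, fixing the associated $\bI$ for step~(1). I would let $\cP'$ be the subgraph of $\cP$ induced on $\bV\setminus\bI$; as an induced subgraph of a transitively closed DAG it is itself a poset, and its edges are edges of $\cP\subseteq\cS^{\mathrm{or}}$, so it is a legitimate output of step~(2). It then remains to show $\cP=\cP'\cup\bI_\sdgsym^\to$, i.e.\ that the edges of $\cP$ incident to $\bI$ are exactly those of $\bI_\sdgsym^\to$. One inclusion holds since $\bI_\sdgsym^\to\subseteq\cP$; for the other, an edge of $\cP$ incident to $i\in\bI$ can only join $i$ to a neighbour (skeleton constraint from $\cP\subseteq\cS^{\mathrm{or}}$), and it cannot point into $i$, since $i\to j\in\bI_\sdgsym^\to\subseteq\cP$ already and a reverse edge $j\to i$ would create a $2$-cycle. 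Hence every $\bI$-incident edge of $\cP$ has the form $i\to j$ with $j\in N(i)$, giving exactly $\bI_\sdgsym^\to$ and completing the construction.
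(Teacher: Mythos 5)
Your proof is correct and follows exactly the route the paper intends: the paper offers no explicit argument beyond the remark that the theorem is obtained ``by combining Propositions~\ref{prop:dagcharact} and~\ref{prop:dagalgorithm}'', which is precisely your sandwich characterization $\bI_\sdgsym^\to \subseteq \cP \subseteq \cS^{\mathrm{or}}$. The one substantive step the paper leaves entirely implicit --- that $\cP' \cup \bI_\sdgsym^\to$ is again transitively closed, which you establish via the monotonicity of boundaries along sink-orientation edges together with the simplicial-node characterization $\textrm{Bd}(i)\subseteq\textrm{Bd}(v)$ --- you supply correctly, as well as the converse decomposition of a faithful poset into an induced poset on $\bV\setminus\bI$ plus exactly the edges of $\bI_\sdgsym^\to$.
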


A nice feature of this construction is that step (3) is unambiguous:
every choice for $\bI$ in step (1) and $\cP$ in step
(2) yields exactly one poset. Figure~\ref{algo:enumsep} gives an
explicit pseudocode for an algorithm that uses Theorem~\ref{th:posetenum}
to enumerate all faithful posets.

\begin{figure}[ht]
\begin{algorithmic}
\Function{FaithfulPosets}{$\sdgsym=(\bV(\sdgsym),\bE(\sdgsym))$}
\Function{ListPosets}{$\cG,\cS,\bR,\bI^\to_\sdgsym$} 
    \If{ $\cG$ is acyclic and atransitive}
    \State{Output $\cG \cup \bI^\to_\sdgsym$}
    \If{ skeleton of $\cG \subsetneq $ skeleton of $\cS$  }
    \State{$e \gets $ some edge consistent with $\bE(\cS) \setminus \bR$}
    \State{{\sc  ListPosets}($\cG,\cS,\bR\cup\{e\},\bI^\to_\sdgsym$)}
    \State{$\bE(\cG) \gets \bE(\cG) \cup \{e\}$}
    \State{{\sc  ListPosets}($\cG,\cS,\bR\cup\{e\},\bI^\to_\sdgsym$)}
  \EndIf

  \EndIf
\EndFunction
\For{all node sets $\bI$ of $\sdgsym$ consisting of 
one simplicial \\ \hspace{1.5cm}  node per simplex}
\State{$\cG \gets $ empty graph on nodes of $\bV(\sdgsym)\setminus\bI$} 
\State{$\cS \gets $ sink graph of $\sdgsym$ on nodes of  $\bV(\sdgsym)\setminus\bI$}
\State{{\sc  ListPosets}($\cG,\cS,\emptyset,\bI^\to_\sdgsym$)}
\EndFor
\EndFunction
\end{algorithmic}
\caption{Enumeration algorithm for faithful posets.}
\label{algo:enumsep}
\end{figure}

Our algorithm is efficient in the sense that at every internal
node in its recursion tree, it outputs a faithful poset.
At every node we need to 
evaluate whether the current $\cG$ is acyclic and atransitive
(i.e., contains no transitive edges), which can be done in polynomial
time. Also simplexes and their simplicial vertices can be found 
in polynomial time \cite{kloks00}.
Thus, our algorithm is a \emph{polynomial delay enumeration
algorithm} similar to the ones used to enumerate 
adjustment sets for DAGs \citep{textor11_uai,vanconstructing}. 
Figure~\ref{fig:consistentposets} shows an example output
for this algorithm.

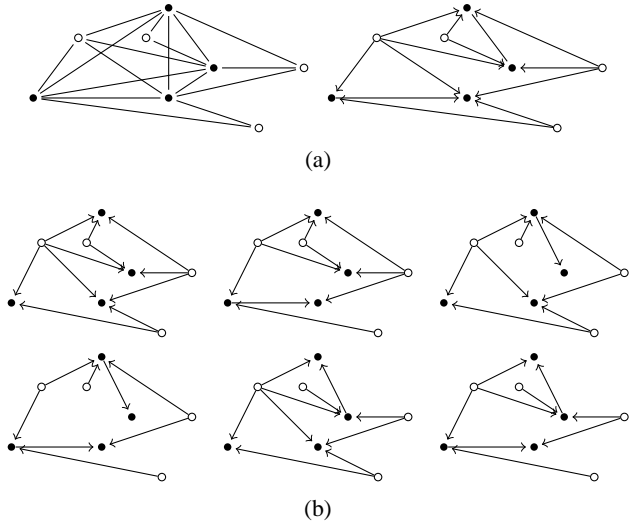
\begin{figure}

\newcommand\knoten{}
\tikzstyle{every node}=[circle,fill,inner sep=1pt]
\tikzstyle{simplicial}=[draw=black,fill=white]%

\renewcommand{\knoten}{
\node [simplicial] (v1) at (1.5,1.5) {};
\node [simplicial] (v2) at (3.5,0) {};
\node [simplicial] (v3) at (4,1) {};
\node [simplicial] (v4) at (2.25,1.5) {};
\node (v5) at (1,0.5) {};
\node (v8) at (2.5,.5) {};
\node (v6) at (3,1) {};
\node (v7) at (2.5,2) {};
}%
\begin{subfigure}{\columnwidth}
\centering
\begin{tikzpicture}[xscale=1.2,yscale=.8]
\knoten
\tikzstyle{every edge}=[bi]
\draw (v1) edge  (v5) 
(v2) edge  (v5) 
(v1) edge  (v6) 
(v3) edge  (v6) 
(v4) edge  (v6) 
(v5) edge  (v6) 
(v1) edge  (v7) 
(v3) edge  (v7) 
(v4) edge  (v7) 
(v5) edge  (v7) 
(v6) edge  (v7) 
(v1) edge  (v8) 
(v2) edge  (v8) 
(v3) edge  (v8) 
(v5) edge  (v8) 
(v6) edge  (v8) 
(v7) edge  (v8);
\end{tikzpicture} \ \ 
\begin{tikzpicture}[xscale=1.2,yscale=.8]
\knoten
\tikzstyle{every edge}=[dir]
\draw (v1) edge  (v5) 
(v2) edge  (v5) 
(v1) edge  (v6) 
(v3) edge  (v6) 
(v4) edge  (v6) 
(v7) edge [bi] (v6) 
(v1) edge  (v7) 
(v3) edge  (v7) 
(v4) edge  (v7) 
(v1) edge  (v8) 
(v2) edge  (v8) 
(v3) edge  (v8) 
(v5) edge  (v8);
\end{tikzpicture}
\caption{}
\end{subfigure}

\bigskip

\begin{subfigure}{\columnwidth}

\centering

\begin{tikzpicture}[scale=.8]
\knoten
\tikzstyle{every edge}=[dir]
\draw (v1) edge  (v5) 
(v2) edge  (v5) 
(v1) edge  (v6) 
(v3) edge  (v6) 
(v4) edge  (v6) 
(v1) edge  (v7) 
(v3) edge  (v7) 
(v4) edge  (v7) 
(v1) edge  (v8) 
(v2) edge  (v8) 
(v3) edge  (v8) 
;
\end{tikzpicture}\hfill
\begin{tikzpicture}[scale=.8]
\knoten
\tikzstyle{every edge}=[dir]
\draw (v1) edge  (v5) 
(v2) edge  (v5) 
(v1) edge  (v6) 
(v3) edge  (v6) 
(v4) edge  (v6) 
(v1) edge  (v7) 
(v3) edge  (v7) 
(v4) edge  (v7) 
(v3) edge  (v8) 
(v5) edge  (v8)
;
\end{tikzpicture}\hfill
\begin{tikzpicture}[scale=.8]
\knoten
\tikzstyle{every edge}=[dir]
\draw (v1) edge  (v5) 
(v2) edge  (v5) 
(v7) edge  (v6) 
(v1) edge  (v7) 
(v3) edge  (v7) 
(v4) edge  (v7) 
(v1) edge  (v8) 
(v2) edge  (v8) 
(v3) edge  (v8) 
;
\end{tikzpicture}\\[.5em]
\begin{tikzpicture}[scale=.8]
\knoten
\tikzstyle{every edge}=[dir]
\draw (v1) edge  (v5) 
(v2) edge  (v5) 
(v7) edge  (v6) 
(v1) edge  (v7) 
(v3) edge  (v7) 
(v4) edge  (v7) 
(v3) edge  (v8) 
(v5) edge  (v8)
;
\end{tikzpicture}\hfill
\begin{tikzpicture}[scale=.8]
\knoten
\tikzstyle{every edge}=[dir]
\draw (v1) edge  (v5) 
(v2) edge  (v5) 
(v1) edge  (v6) 
(v3) edge  (v6) 
(v4) edge  (v6) 
(v7) edge [rdir] (v6) 
(v1) edge  (v7) 
(v1) edge  (v8) 
(v2) edge  (v8) 
(v3) edge  (v8) 
;
\end{tikzpicture}\hfill
\begin{tikzpicture}[scale=.8]
\knoten
\tikzstyle{every edge}=[dir]
\draw (v1) edge  (v5) 
(v2) edge  (v5) 
(v1) edge  (v6) 
(v3) edge  (v6) 
(v4) edge  (v6) 
(v7) edge [rdir] (v6) 
(v1) edge  (v7) 
(v3) edge  (v8) 
(v5) edge  (v8)
;
\end{tikzpicture}
\caption{}
\end{subfigure}

\caption{(a) A graph $\sdgsym$ and its sink graph. 
(b) Transitive reductions of all 6 faithful posets 
that are 
generated by Algorithm~{\sc FaithfulPosets} for 
the input graph (a).
}
\label{fig:consistentposets}
\end{figure}

\section{EXAMPLE APPLICATIONS}

\label{sec:combinatorics}

In this section, we apply the previous results to explore some explicit
combinatorial properties of \sdgabbrv{}s and their faithful DAGs.

\subsection{Counting \sdgabbrv{}s}

We revisit the question: when can a marginal independence
graph allow a causal interpretation \citep{Pearl1994}? 
More precisely, we ask \emph{how many} marginal independence graphs on $n$
variables are \sdgabbrv{}s. 
We reformulate this question into a version that has
been investigated in the context of poset theory. 
Let the \emph{height} of a poset $\cP$ be the length of a longest
path in $\cP$.  The following is an obvious 
implication of Theorem~\ref{thm:minimalposet}.

\begin{corollary}
The number $M(n)$ of non-isomorphic \sdgabbrv{}s with $n$ nodes is equal
to the number of non-isomorphic posets on $n$ variables of height 1.
\end{corollary}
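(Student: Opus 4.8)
The plan is to exhibit a bijection, up to isomorphism, between the SMIGs on $n$ nodes and the posets on $n$ nodes of height at most $1$ (equivalently, posets whose longest directed path is a single edge, i.e.\ having no chain of three comparable elements), with the degenerate antichain accounting for the edgeless graph. The bijection is mediated by the minimal faithful posets of Theorem~\ref{thm:minimalposet}. I would define a forward map $\Phi$ sending a SMIG $\sdgsym$ to the isomorphism class of a minimal faithful poset of $\sdgsym$. This is well defined: Theorem~\ref{thm:minimalposet} guarantees existence, its proof shows that such a poset contains no directed path of length greater than $1$ and hence has height at most $1$, and all minimal faithful posets are isomorphic. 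The inverse map $\Psi$ sends a height-$\le 1$ poset $\cP$ to its marginal independence graph, which equals its bound graph and is therefore a member of $\sdgset$.

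Next I would show $\Phi$ and $\Psi$ are mutually inverse on isomorphism classes. The identity $\Psi(\Phi(\sdgsym))=\sdgsym$ is immediate, since a minimal faithful poset of $\sdgsym$ is by definition faithful and so has marginal independence graph $\sdgsym$. The substantive direction is $\Phi(\Psi(\cP))\cong\cP$, which amounts to proving that an arbitrary height-$\le 1$ poset $\cP$ is itself a minimal faithful poset of its own bound graph $\sdgsym$. Let $\bI$ be the set of minimal elements of $\cP$. Since $\cP$ has height at most $1$, every node is either minimal or maximal and every edge runs from a minimal to a maximal element; one then checks that for minimal $i$ the children of $i$ in $\cP$ coincide with the $\sdgsym$-neighbors $N(i)$ (both are exactly the nodes sharing the common ancestor $i$), so that $\cP=\bI_\sdgsym^\to$. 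By Theorem~\ref{thm:minimalposet} it remains to verify that $\bI$ is a legal selection, i.e.\ that it consists of exactly one simplicial vertex per simplex of $\sdgsym$.

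This verification is the main obstacle. I would establish three facts. First, each minimal element $i$ is simplicial, because $\textrm{Bd}(i)=\{i\}\cup\{\text{children of }i\}$ and any two children of $i$ share the common ancestor $i$, so $\textrm{Bd}(i)$ is a clique. Second, distinct minimal elements lie in distinct simplexes, since two minimal elements have no common ancestor and are therefore non-adjacent, whereas a simplex is a clique. Third, and most delicately, every simplex contains a minimal element: a simplex equals $\textrm{Bd}(s)$ for some simplicial vertex $s$; if $s$ is minimal we are done, and if $s$ is maximal then simpliciality forces $s$ to have a unique parent $p$ (two distinct parents would be non-adjacent minimal neighbors of $s$), whence $\textrm{Bd}(s)=\textrm{Bd}(p)$ contains the minimal element $p$. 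Together these three facts yield a bijection between minimal elements and simplexes in which each minimal element is the chosen simplicial vertex of its simplex, so $\cP=\bI_\sdgsym^\to$ is indeed a minimal faithful poset.

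Because all minimal faithful posets of a given graph are isomorphic, the preceding gives $\Phi(\Psi(\cP))\cong\cP$, so $\Phi$ and $\Psi$ are inverse bijections on isomorphism classes and the two counts agree. The only care needed is the trivial case: the unique height-$0$ poset (the $n$-element antichain) is the minimal faithful poset of the edgeless graph, so the equality $M(n)$ is read under the convention that ``height $1$'' includes this degenerate poset (equivalently, ``height at most $1$''). I expect fact~(iii) — showing that no simplex can avoid all minimal elements — to be the step requiring the most care, as it is where the poset's layered structure must be translated precisely into the simplex structure of the bound graph.
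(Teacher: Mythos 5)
Your argument is correct and takes the route the paper intends: the paper asserts this corollary as an ``obvious implication'' of Theorem~\ref{thm:minimalposet} with no written proof, and your bijection via minimal faithful posets --- including the careful verification that every poset of height at most one equals $\bI_\sdgsym^\to$ for its own bound graph with $\bI$ the set of minimal elements --- supplies exactly the details the paper omits. The only subtlety, which you already flag, is the degenerate antichain/edgeless-graph case and the resulting ``height $1$'' versus ``height at most $1$'' convention.
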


Enumeration of posets is a highly nontrivial problem,
and an intensively studied one. The online encyclopedia
of integer sequences (OEIS) tabulates $M(n)$ for $n$ up to 40
\citep{oeisA007776}.
We give the first 10 entries of the sequence in Table~\ref{tab:numbers}
and compare it to the number of graphs in general
(up to isomorphism). As we observe, the fraction of 
graphs that admit a DAG on the same variables
decreases swiftly as $n$ increases.

\begin{table}
\centering
\begin{tabular}{rrrr}
 & connected & conn. & unique \\
$n$ & graphs & \sdgabbrv{}s & DAG \\
2 & 1 & 1 & 0 \\
3 & 2 & 2 & 1 \\
4 & 6 & 4 & 1 \\
5 & 21 & 10 & 2 \\
6 & 112 & 27 & 4 \\
7 & 853 & 88 & 10 \\
8 & 11,117 & 328 & 27 \\
9 & 261,080 & 1,460 & 90 \\
10  & 11,716,571 & 7,799 & 366 \\
\end{tabular}
\caption{
Comparison of the number of unlabeled connected graphs
with $n$ nodes to the number of such graphs that
are also \sdgabbrv{}s.
For $n=13$ (not shown), 
non-\sdgabbrv{}s outnumber \sdgabbrv{}s by more than $10^7:1$.
}
\label{tab:numbers}
\end{table}

\subsection{Graphs with a Unique Faithful DAG}

From a causal inference viewpoint, the best we can hope for
is a \sdgabbrv{} to which only single, unique DAG
is faithful.
The classical example is the 
graph $\cdot-\cdot-\cdot$, which for more
than 3 nodes generalizes
to a ``star'' graph. However, for 5 or more
nodes there are graphs other than the star
which also induce a single unique DAG.
Combining Lemma~\ref{thm:sinkor2} and
Theorem~\ref{thm:minimalposet} allows
for a simple characterization of all such \sdgabbrv{}s.

\begin{corollary}
A \sdgabbrv{} $\sdgsym$ with $n$ nodes 
has a unique faithful DAG iff each of its simplexes
contains only one simplicial node and its sink
orientation equals $\bI_\sdgsym^\to$.
\end{corollary}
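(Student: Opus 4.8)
The plan is to exploit the sandwich structure of the faithful set established above: by Proposition~\ref{prop:dagcharact}, every faithful DAG $\cG$ is wedged between a minimal poset (its transitive closure dominates some $\bI_\sdgsym^\to$) and a maximal poset, i.e.\ a sink orientation (which dominates $\cG$). I would argue that the multiplicity of faithful DAGs is driven by exactly three independent sources of freedom, and that the two stated conditions annihilate all of them: (i) the choice of which simplicial node of each simplex enters $\bI$; (ii) the orientations of the undirected edges of the sink graph $\cS(\sdgsym)$; and (iii) the insertion of transitive edges into a fixed poset. Throughout I will use the (unnumbered) proposition that a DAG and its transitive closure share the same marginal independence graph, so that the transitive closure $\cP(\cG)$ of any faithful $\cG$ is again faithful.

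For the direction ($\Leftarrow$), assume each simplex has a single simplicial node and that $\cS(\sdgsym)$ admits the unique sink orientation $\cP^\ast := \bI_\sdgsym^\to$. First I would note that $\bI$ is then forced (it is the set of all simplicial nodes), so $\bI_\sdgsym^\to$ is the unique minimal poset, and that $\bI_\sdgsym^\to$ is atransitive: since $\bI$ is an independent set, every edge of $\bI_\sdgsym^\to$ has its tail in $\bI$ and its head outside $\bI$, so there is no directed path of length two and hence no transitive edge. Now for any faithful $\cG$, Proposition~\ref{prop:dagcharact}(1) gives $\cG \subseteq \cP^\ast$, whence $\cP(\cG) \subseteq \cP^\ast$ because $\cP^\ast$ is transitive; Proposition~\ref{prop:dagcharact}(2) gives $\cP(\cG) \supseteq \bI_\sdgsym^\to = \cP^\ast$. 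Thus $\cP(\cG) = \cP^\ast$, and since $\cP^\ast$ is atransitive it equals its own transitive reduction, which is contained in $\cG$; combined with $\cG \subseteq \cP^\ast$ this forces $\cG = \cP^\ast$. Hence the faithful DAG is unique.

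For the converse ($\Rightarrow$), assume $\cG$ is the unique faithful DAG and rule out the three freedoms in turn. If some simplex contained two simplicial nodes $a \neq b$, then choosing $a$ versus $b$ for that simplex yields two admissible sets $\bI,\bI'$; since a simplicial node lies in exactly one simplex, $a \notin \bI'$ and $b \notin \bI$, so $\bI_\sdgsym^\to$ contains $a \to b$ while $(\bI')_\sdgsym^\to$ contains $b \to a$. These are two distinct faithful DAGs (Theorem~\ref{thm:minimalposet}), contradicting uniqueness; so each simplex has exactly one simplicial node. Next, if $\cS(\sdgsym)$ had an undirected edge, its two orientations would give two distinct faithful maximal posets (Theorem~\ref{thm:sinkor}), again contradicting uniqueness; hence the sink orientation is unique. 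Finally, both this unique sink orientation and the unique minimal poset $\bI_\sdgsym^\to$ are faithful DAGs, so by uniqueness they coincide, i.e.\ the sink orientation equals $\bI_\sdgsym^\to$.

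I expect the main obstacle to be the careful bookkeeping needed to pass from the poset-level results (Theorems~\ref{thm:sinkor} and~\ref{thm:minimalposet}) to a statement about individual labelled DAGs, in which transitive edges can in principle be toggled independently. The crucial lever is the atransitivity of $\bI_\sdgsym^\to$: it is what collapses the entire transitive-equivalence class of $\cP^\ast$ to the single DAG $\cP^\ast$ in the ($\Leftarrow$) direction. A secondary point to get right is that distinct admissible sets $\bI$ really do produce distinct DAGs, which relies on the preliminary fact that each simplicial node belongs to exactly one simplex. I would also remark that condition~(b) in fact subsumes condition~(a): since $\bI_\sdgsym^\to$ is minimal and any other $(\bI')_\sdgsym^\to$ would (by Lemma~\ref{thm:sinkor2}) be a faithful subgraph of the unique sink orientation $\bI_\sdgsym^\to$, minimality forces $(\bI')_\sdgsym^\to = \bI_\sdgsym^\to$ and hence $\bI' = \bI$. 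The two conditions are therefore stated jointly mainly for transparency.
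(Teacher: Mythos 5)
Your proposal is correct and follows essentially the route the paper intends: the paper gives no explicit proof, only the remark that the corollary follows by combining Lemma~\ref{thm:sinkor2} and Theorem~\ref{thm:minimalposet}, and your argument is a careful elaboration of exactly that sandwich between the minimal poset $\bI_\sdgsym^\to$ and the sink orientation (via Proposition~\ref{prop:dagcharact}), with the atransitivity of $\bI_\sdgsym^\to$ correctly identified as the step that pins down the individual labelled DAG. Your closing observation that the second condition subsumes the first is a nice bonus not present in the paper.
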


Based on this characterization, we computed the number
of \sdgabbrv{}s with unique DAGs for $n$ up till $9$ (Table~\ref{tab:numbers}). 
Interestingly, this integer sequence does not seem to correspond to any
known one.

\subsection{Information Content of a \sdgabbrv{}}

How much information does a marginal independence graph contain? 
Let us denote the number of posets on $n$ variables by 
$P(n)$. After observing a marginal independence graph
$\sdgsym$, the number of models that are still faithful
to the data reduces to size $P(n) - k(\sdgsym)$, where
$k(\sdgsym) \leq P(n)$ 
(indeed, quite often $k(\sdgsym) = P(n)$ 
as we can see in Table~\ref{tab:numbers}).
Of course, the number 
$k(\sdgsym)$ strongly depends on the structure
of the \sdgabbrv{} $\sdgsym$. But even in the worst case
when $\sdgsym$ is a complete graph,
the space of possible models is still reduced because not all
DAGs entail a complete marginal independence graph.

Thus, the following simple consequence of
Theorem~\ref{thm:minimalposet} helps to derive 
a worst-case bound on how much a \sdgabbrv{} reduces structural
uncertainty with respect to the model space of posets with
$n$ variables.

\begin{corollary}
The number of faithful posets with respect to
a complete graph with $n$ nodes is $n$ times the number of posets 
with $n-1$ nodes.
\end{corollary}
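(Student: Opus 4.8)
The plan is to count the faithful posets by invoking the bijective enumeration of Theorem~\ref{th:posetenum} specialized to $\sdgsym = K_n$, the complete graph on the (labelled) vertex set $\bV$. First I would identify the combinatorial data that the three steps of that theorem produce for $K_n$. Since every node of $K_n$ has boundary equal to $\bV$, which is a clique, every node is simplicial, and the only maximal clique is $\bV$ itself; hence $K_n$ contains a single simplex, namely $\bV$. Step (1) therefore offers exactly $n$ choices, one for each candidate simplicial node $i$, so we may write $\bI = \{i\}$. Because all boundaries coincide, the sink graph $\cS(\sdgsym)$ has every edge undirected, so $\cS(\sdgsym) = K_n$ and the sink orientations of $\sdgsym$ are precisely the transitive tournaments (total orders) on $\bV$.

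Next I would evaluate the number of posets produced in step (2) for a fixed choice $\bI = \{i\}$. Here we must count posets on the $n-1$ remaining labelled nodes $\bV \setminus \{i\}$ that are edge subgraphs of some sink orientation of $\sdgsym$. Restricting a transitive tournament on $\bV$ to $\bV \setminus \{i\}$ yields a transitive tournament on $n-1$ nodes, and as the orientation ranges over all total orders of $\bV$ these restrictions realize every total order of $\bV \setminus \{i\}$. I claim that the posets on $\bV \setminus \{i\}$ that embed as edge subgraphs of some such tournament are exactly all posets on $n-1$ nodes: any poset $\cP$ admits a linear extension $L$, and orienting $K_{n-1}$ according to $L$ gives a transitive tournament of which $\cP$ is an edge subgraph; conversely step (2) already requires the generated graph to be a poset. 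Hence step (2) contributes exactly $P(n-1)$ posets, independently of which $i$ was chosen.

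Finally I would show that distinct choices of $(\bI, \cP)$ yield distinct faithful posets, so that the total is the clean product $n \cdot P(n-1)$ rather than merely an upper bound. For $\bI = \{i\}$ the poset $\bI_\sdgsym^\to$ is the star with $i \to j$ for every $j \neq i$; adding it to the poset $\cP$ on $\bV \setminus \{i\}$ (step (3)) places $i$ strictly below every other node, and the result is automatically transitively closed because $i$ already dominates all of $\cP$. In the resulting poset $\cP^+$, the node $i$ is the unique minimal element: it has no parent, while every other node has $i$ as an ancestor. Thus $i$ can be read off from $\cP^+$ as its unique minimum, and $\cP$ is recovered as $\cP^+$ restricted to $\bV \setminus \{i\}$; this makes the map $(\{i\}, \cP) \mapsto \cP^+$ injective. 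Combined with the surjectivity guaranteed by Theorem~\ref{th:posetenum}, we obtain a bijection, and summing $P(n-1)$ over the $n$ choices of $i$ gives $n \cdot P(n-1)$.

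The structural observations about $K_n$ are routine; the step that genuinely needs care is the identification in step (2) of ``edge subgraph of a sink orientation that is a poset'' with ``arbitrary poset on $n-1$ nodes'' via linear extensions, together with the injectivity argument keyed on the unique minimal element, since these are precisely what turn the enumeration into an exact count rather than an inequality.
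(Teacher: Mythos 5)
Your proof is correct and follows essentially the route the paper intends: the paper states this corollary without proof as a ``simple consequence'' of Theorem~\ref{thm:minimalposet}, namely that for $K_n$ the single simplex forces exactly one simplicial node $i$ to sit below all others while the remaining $n-1$ nodes carry an arbitrary poset. Your additional care in verifying that the sink-orientation condition of Theorem~\ref{th:posetenum} is vacuous for $K_n$ (via linear extensions) and that the map is injective (via the unique minimal element) is exactly what is needed to make the implicit bijection explicit, and it matches the values in Table~\ref{tab:numbers2}.
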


Table~\ref{tab:numbers2} lists the number of possible
posets before and after observing a complete \sdgabbrv{}
for up to 10 variables. In this sense, at $n=10$, the
uncertainty is reduced about 15-fold.

\begin{table}
\centering
\begin{tabular}{rrr}
$n$ & posets with $n$ nodes & faithful to $C_n$ \\
1 & 1 & 1 \\
2 & 3 & 2 \\
3 & 19 & 9 \\
4 & 219 & 76 \\
5 & 4,231 & 1,095 \\
6 & 130,023 & 25,386 \\
7 & 6,129,859 & 910,161 \\
8 & 431,723,379 & 49,038,872 \\
9  & 44,511,042,511 & 3,885,510,411 \\
10 &  6,611,065,248,783 & 445,110,425,110 \\
\end{tabular}
\caption{
Possible labelled posets on $n$ variables 
before and after observing a complete SMIG $C_n$.
}
\label{tab:numbers2}
\end{table}

We note that a similar but more technical
analysis is possible for uncertainty reduction with respect
to DAGs instead of posets. We omit this due to space limitations.

\section{MODELS WITH LATENT VARIABLES}

\label{sec:latents}

In this section we consider situations in which 
a graph $\sdgsym$ is not a \sdgabbrv{} (which
can be detected using the algorithm in 
Theorem~\ref{thm:recognize:sdgs}).
Similarly to the definition proposed in  
\citet{pearl1987logic} for the general 
dependency models, to obtain faithful DAGs for
such graphs we will extend the DAGs
with some auxiliary nodes. We generalize 
Definition~\ref{definition:dependency:graph}
as follows.

\begin{definition}
Let $\sdgsym=(\bV,E(\sdgsym))$ be a  graph
and let $\bQ$, with $\bQ\cap\bV=\emptyset$, be a set of 
auxiliary nodes. A DAG $\cG=(\bV\cup \bQ, E(\cG))$ 
is faithful to $\sdgsym$ if for all 
$v,w\in \bV$, $v-w \in E(\sdgsym)$ iff
$v$ and $w$ have a common ancestor in $\cG$. 
\end{definition}

The result below follows immediately from 
Proposition~\ref{proposition:induced:subgraph:MDG}.
\begin{proposition}
For every  graph $\sdgsym$ there exists 
a faithful DAG $\sdgsym$ with some auxiliary nodes.
\end{proposition}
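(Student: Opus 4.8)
The plan is to reuse verbatim the gadget construction from the proof of Proposition~\ref{proposition:induced:subgraph:MDG}, the key observation being that the generalized notion of faithfulness constrains only pairs of nodes inside $\bV$. Given an arbitrary graph $\sdgsym=(\bV,E(\sdgsym))$, I would take as auxiliary set $\bQ=\{v_e : e \in E(\sdgsym)\}$, one fresh node per edge of $\sdgsym$, and build the DAG $\cG=(\bV\cup\bQ, E(\cG))$ whose edges are exactly $u \gets v_e \to w$ for every edge $e = u - w$ of $\sdgsym$. In this DAG every auxiliary node is a source of out-degree two and every original node is a sink.

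The single thing to verify is that $\cG$ is faithful to $\sdgsym$ in the sense of the generalized definition, i.e.\ that for all $v,w\in\bV$ we have $v - w \in E(\sdgsym)$ iff $v$ and $w$ share a common ancestor in $\cG$. Since the auxiliary nodes are sources and the original nodes are sinks, the ancestors of an original node $u$ are exactly $u$ itself together with the nodes $v_e$ for the edges $e$ incident to $u$. Hence two distinct original nodes $v$ and $w$ have a common ancestor iff some $v_e$ is a parent of both, which by construction happens precisely when $e = v - w$ is an edge of $\sdgsym$. This establishes the required equivalence.

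The proof is essentially immediate and presents no real obstacle, because the generalized definition imposes no requirement on the (in)dependencies among the auxiliary nodes, nor between an auxiliary and an original node; it fixes only the marginal independence structure induced on $\bV$. The computation above is therefore just the restriction to $\bV$ of the marginal independence graph identity already established in Proposition~\ref{proposition:induced:subgraph:MDG}, and the construction uses at most $|E(\sdgsym)| \le \binom{|\bV|}{2}$ auxiliary nodes. I would close by remarking that this bound is far from tight, so that the natural refinement---minimizing the number of auxiliary nodes---is a genuinely harder question, which is exactly the one addressed by the NP-hardness result that follows.
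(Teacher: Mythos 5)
Your proof is correct and follows exactly the route the paper intends: the paper derives this proposition directly from Proposition~\ref{proposition:induced:subgraph:MDG}, whose proof uses precisely your gadget of one source node $v_e$ per edge with $u \gets v_e \to w$. Your write-up is in fact slightly more careful than the paper's, since you explicitly verify that the generalized faithfulness condition only constrains pairs within $\bV$ and that original nodes, being sinks, admit no other common ancestors.
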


Obviously, if $\sdgsym \in \sdgset$ then there exists a
faithful DAG to $\sdgsym$ with $\bQ=\emptyset$.
For $\sdgsym\notin\sdgset$, from the proof of 
Proposition~\ref{proposition:induced:subgraph:MDG}
it follows that there exists a set $\bQ$ of at most 
$|E(\sdgsym)|$ nodes and a DAG $\cG$ 
such that $\cG$ is faithful to $\sdgsym$
with auxiliary nodes $\bQ$.
But the problem arises to minimize 
the cardinality of~$\bQ$.

\begin{theorem}
The problem to decide if for a given graph $\sdgsym$ and 
an integer $k$, there exists a faithful DAG 
with at most $k$ auxiliary nodes,  is NP-complete. 
\end{theorem}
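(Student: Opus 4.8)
The plan is to prove membership in NP and then NP-hardness by a reduction from the \textsc{Edge Clique Cover} problem, which asks whether the edges of a graph can be covered by at most $k$ cliques and is well known to be NP-complete. Membership is the easy direction: by (the proof of) Proposition~\ref{proposition:induced:subgraph:MDG} a faithful DAG with at most $|E(\sdgsym)|$ auxiliary nodes always exists, so for $k \geq |E(\sdgsym)|$ the answer is trivially ``yes'' and otherwise we may guess a DAG $\cG$ on $\bV \cup \bQ$ with $|\bQ| \leq k \leq |E(\sdgsym)|$, whose description is of polynomial size. We then verify faithfulness by computing the ancestor relation (the transitive closure) and checking, for every pair $v,w \in \bV$, that $v$ and $w$ have a common ancestor in $\cG$ iff $v - w \in E(\sdgsym)$; this is clearly polynomial.

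The core of the hardness argument is a structural characterization of faithful DAGs in terms of their \emph{sources} (nodes without parents). For a faithful DAG $\cG$ on $\bV \cup \bQ$ and a source $s$, let $D(s) \subseteq \bV$ be the set of $\bV$-nodes of which $s$ is an ancestor. Since every common ancestor of $v,w$ lies above some source, two $\bV$-nodes are adjacent in $\sdgsym$ iff they share a common source-ancestor; hence the sets $D(s)$ are cliques of $\sdgsym$ whose union covers exactly $E(\sdgsym)$, i.e.\ they form an edge clique cover of $\sdgsym$ of size equal to the number of sources. Moreover, any source $s \in \bV$ must be \emph{simplicial}: every $\sdgsym$-neighbor of $s$ can only share the common ancestor $s$ (as $s$ has no proper ancestor) and is therefore a descendant of $s$, so all neighbors of $s$ are pairwise adjacent. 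Conversely, any edge clique cover $C_1,\dots,C_m$ of $\sdgsym$ is realized by the height-$1$ DAG that introduces one fresh auxiliary source $q_i$ with $q_i \to v$ for all $v \in C_i$ and leaves every $\bV$-node as a sink; because each $D(q_i)=C_i$ is a clique and the $C_i$ cover precisely $E(\sdgsym)$, no spurious common ancestors arise and this DAG is faithful using $m$ auxiliary nodes.

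These two facts give the key equality: \emph{if $\sdgsym$ has no simplicial vertex, then every source of every faithful DAG is an auxiliary node}, so the minimum number of auxiliary nodes equals the edge clique cover number of $\sdgsym$. It therefore suffices to reduce \textsc{Edge Clique Cover} to instances that are free of simplicial vertices. Given an instance $(G,k)$ with $G$ on $n$ vertices (and $G$ not complete, an easy special case), I attach to each vertex $v_i$ a private $5$-cycle and join $v_i$ to two \emph{non-adjacent} cycle vertices. A direct check shows every vertex of the resulting graph $\sdgsym$ now has two non-adjacent neighbors, so $\sdgsym$ has no simplicial vertex; and every clique meeting a gadget contains at most one vertex of $G$, so the $5$ cycle-edges and $2$ connector-edges of each gadget force exactly $t=7$ additional single-edge cliques that cannot be merged with any clique covering an edge of $G$. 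Hence the edge clique cover number satisfies $\mathrm{ecc}(\sdgsym)=\mathrm{ecc}(G)+7n$, and by the equality above $(G,k)$ is a yes-instance iff $\sdgsym$ admits a faithful DAG with at most $k+7n$ auxiliary nodes, completing the reduction.

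The step I expect to be the main obstacle is the structural lemma of the second paragraph, specifically pinning down that minimizing auxiliary nodes is genuinely an edge-clique-cover problem: one must argue that it never helps to use non-source auxiliary nodes, that $\bV$-sources are forced to be simplicial, and that the height-$1$ construction realizes an arbitrary cover \emph{without} introducing extra common ancestors. Once this correspondence is exact, the remaining work is the routine (but necessary) gadget verification ensuring the reduction output is simplicial-vertex-free so that the minimum auxiliary count coincides with $\mathrm{ecc}(\sdgsym)$ rather than merely bounding it.
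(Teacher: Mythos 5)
Your proof is correct, but it takes a genuinely different route from the paper's, so it is worth comparing the two. Both arguments reduce from edge clique cover, but the gadgets and the correctness arguments differ. The paper attaches a single pendant vertex $w_i$ to each $v_i$; the role of these pendants is to forbid any directed path between distinct $\bV$-vertices in a faithful DAG (such a path would give $w_i$ and $v_j$ a common ancestor, contradicting $w_i \not- v_j$), and the backward direction then performs two rounds of surgery on an arbitrary faithful DAG to reach a normal form in which each auxiliary node points into a clique of the original graph and these cliques cover its edges; the pendant edges cost no auxiliary nodes because $w_i \to v_i$ realizes them, so the correspondence is offset-free. You instead prove a clean structural lemma --- the sets $D(s)$ of $\bV$-descendants of the \emph{sources} of any faithful DAG form an edge clique cover, any source lying in $\bV$ must be simplicial in $\sdgsym$, and conversely any cover is realized by a height-$1$ DAG with one fresh source per clique --- so that on simplicial-vertex-free graphs the minimum number of auxiliary nodes equals the edge clique cover number exactly; your $5$-cycle gadgets then eliminate simplicial vertices at the controlled cost of $7$ forced single-edge cliques per original vertex, giving the offset $k \mapsto k + 7n$. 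Your route buys a tighter and more transparent identity (minimum auxiliary count $=$ edge clique cover number on the reduced instances) and entirely avoids the paper's delicate trek analysis and edge-reattachment steps, at the price of a larger construction and an additive offset; the paper's construction is more economical but its backward direction is the more laborious part of its proof. I see no gap in your argument: the lower bound needs only that every source is auxiliary and that the source-descendant sets cover all edges, both of which you establish, and the gadget verification (no simplicial vertices; the seven gadget edges are pairwise unmergeable maximal cliques disjoint from any clique of $G$) goes through.
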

\begin{proof}
It is easy to see that the problem is in NP. 
To prove that it is NP-hard, we show a polynomial time 
reduction from the edge clique cover problem, that is known
to be NP-complete~\citep{karp1972reducibility}.
Recall that the problem  edge clique cover
is to decide if for a graph $\sdgsym$ 
and an integer $k$ there exist a set of $k$ 
subgraphs of $\sdgsym$, such that each subgraph 
is a clique and each edge of $\sdgsym$ is contained 
in at least one of these subgraphs?

Let $\sdgsym=(\bV,\bE)$ and $k$ be an instance
of the edge clique cover problem, with 
$\bV=\{v_1,\ldots, v_n\}$. 
We construct the marginal independence 
graph $\sdgsym'$ as follows. Let $\bW=\{w_1,\ldots, w_n\}$.
Then $V(\sdgsym')=\bV\cup \bW$ and 
$E(\sdgsym')=\bE\cup\{v_i-w_i: i=1,\ldots,n\}$.
Obviously, $\sdgsym'$ can be constructed from $\sdgsym$
in polynomial time.
We claim that $\sdgsym=(\bV,\bE)$ can be 
covered by $\le k$ cliques iff 
for $\sdgsym'$ there exists a faithful 
DAG $\cG$ with at most $k$ auxiliary nodes.

Assume first that  $\sdgsym=(\bV,\bE)$ can be 
covered by at most $k$ cliques, let us say  
$C_1,\ldots, C_{k'}$, with $k'\le k$.
Then we can construct a faithful 
DAG $\cG$ for $\sdgsym'$ with $k'$ auxiliary nodes
as follows. Its set of nodes is $V(\cG)=\bV\cup \bW \cup \bQ$,
where $\bQ=\{q_1,\ldots,q_{k'}\}$. The edges 
$E(\cG)$ can be defined as
\[
  \{w_i \to v_i :  i=1,\ldots,n\} \cup \bigcup_j \{q_j \to v : v\in C_j \}.
\] 
It is easy to see that $\cG$ is faithful to $\sdgsym'$.

Now assume that a DAG $\cG$, with at most 
$k$ auxiliary nodes $\bQ$, is faithful to $\sdgsym'$. 
From the construction of $\sdgsym'$ it follows that 
for all different nodes $v_i,v_j\in \bV$ there is 
no directed path from $v_i$ to $v_j$ in $\cG$.
If such a path exists, then $v_i$ is an ancestor 
of $v_j$  in $\cG$.
Since $v_i-w_i$ is an edge of $\sdgsym'$,
the nodes $v_i$ and $w_i$ have a common 
ancestor in $\cG$, which must be also a
common ancestor of $w_i$ and $v_j$ -- 
a contradiction because $w_i$ and $v_j$ 
are not incident in $\sdgsym'$.
Thus, all treks connecting pairs of nodes from $\bV$
in $\cG$ must contain auxiliary nodes.
 
Next, we slightly modify $\cG$: for each $w_i$
we remove all incident edges and add the new  edge 
$w_i \to v_i$. The resulting graph $\cG'$, 
is a DAG which remains faithful to $\sdgsym'$. 
Indeed, we cannot obtain a directed cycle in the 
$\cG'$ since no $w_i$ has an in-edge
and the original $\cG$ was a DAG.
To see that the obtained DAG remains faithful 
to $\sdgsym'$ note first that after the modifications, 
$w_i$ and $v_i$ have a common ancestor in $\cG$ 
whereas  $w_i$ and $v_j$, with $i\neq j$, do not.
Otherwise, it would imply a directed path from 
$v_i$ to $v_j$ since $w_i$ is the only possible 
ancestor of both nodes -- a contradiction.
Finally, note that any trek connecting $v_i$ 
and $v_j$ in $\cG$ cannot contain a node from $\bW$.
Similarly, no trek between $v_i$ and $v_j$ in 
$\cG'$ contains a node from $\bW$.
We get that $v_i$ and $v_j$ have a common ancestor in $\cG$
iff they have a common ancestor in $\cG'$.

Thus, in $\cG'$ the auxiliary 
nodes $\bQ$ are incident to $\bV$, but 
not to nodes from $\bW$. 
Below we modify $\cG'$ further and obtain a 
DAG $\cG''$, in which every auxiliary node is 
incident with a node in $V$ via an out-edge only.
To this aim we remove from $\cG'$ all edges going 
out from a node in $V$ to a node in~$Q$.

Obviously, if $v_i$ and $v_j$ have a common ancestor
in $\cG''$, then they also have a common ancestor in $\cG'$,
because $E(\cG'')\subseteq E(\cG')$.
The opposite direction  
follows from the fact we have shown at 
the beginning of this proof that
for all different nodes $v_i,v_j\in \bV$ there is 
no directed path from $v_i$ to $v_j$ in $\cG$.
This is true also for $\cG'$. 
Thus, if $v_i$ and $v_j$ have a common ancestor,
say $x$,  in $\cG'$ then $x\in \bQ$ and 
there exist directed paths $x\to y_1 \to \ldots y_{r}\to v_i$
and $x\to y'_1 \to \ldots y'_{r'}\to v_j$ such that 
also all $y_1, \ldots ,y_{r}$ and $y'_1, \ldots ,y'_{r'}$
belong to $\bQ$. But from the construction of $\cG''$
it follows that both paths belong also to $\cG''$.

Since $\cG''$ is faithful to $\sdgsym$,
for every auxiliary node $Q$ the subgraph 
induced by its children 
$\textit{Ch}(Q)\cap \bV$ in $\cG''$ 
is a clique in $\sdgsym'$.
Moreover every edge $v_i-v_j$
of the graph $\sdgsym$ belongs to at least one such clique. 
Thus the subgraphs induced by 
$\textit{Ch}(q_1)\cap \bV, \ldots, \textit{Ch}(q_{k'})\cap \bV$,
with $k' \le k$,
are cliques that cover $\sdgsym$.
\end{proof}

\section{DISCUSSION}

Given a graph that represents a set of pairwise
marginal independencies, which causal structures on the same 
variables might have generated this graph? Here we 
characterized all these
structures, or alternatively, all maximal and minimal ones.
Furthermore, we have shown that it is possible to deduce
how many exogenous variables (which correspond to 
simplicial nodes) the causal structure might have,
and even to tell whether it might be a tree. For graphs that do 
not admit a DAG on the same variables, we have
studied the problem of explaining the data with as few 
additional variables as possible, and proved it to be NP-hard.
This may be surprising; the related problem of
finding a mixed graph that is Markov equivalent to a bidirected
graph and has as few bidirected edges as possible
is efficiently solvable \citep{Drton2008}.

The connection to posets 
emphasizes that
sets of faithful DAGs have complex combinatorics.
Indeed, if there are no pairwise independent 
variables, then we obtain the 
classical poset enumeration problem
\citep{Brinkmann2002}. Our current, unoptimized implementation of 
the algorithm in Figure~\ref{algo:enumsep} allows us to deal with 
dense graphs up to about 12 nodes (sparse graphs are easier
to deal with).
We point out that our enumeration algorithms operate 
with a ``template graph'', i.e., the sink orientation. It is
possible to incorporate certain kinds of background knowledge, 
like a time-ordering of the variables,
into this template graph by deleting some edges. Such further
constraints could greatly reduce the search space. Another
additional constraint that could be used for linear models is the
precision matrix \citep{Cox1993,Pearl1994}, though 
finding DAGs that explain a given precision
matrix is NP-hard in general \citep{Verma1993},

We observed that the pairwise
marginal independencies substantially reduce
 structural uncertainty even in the worst case 
(Table~\ref{tab:numbers}). Causal inference
algorithms could exploit this to reduce
the number of CI tests. The PC 
algorithm  \citep{Kalisch2007}, for instance, forms the marginal 
independence graph as a first stage before
performing any CI tests. At that stage, it could
be immediately tested if the resulting graph is a 
\sdgabbrv{}, and if not, the algorithm can terminate
as no faithful DAG exists. 

In summary, we have mapped out the space of causal
structures that are faithful to a given set of pairwise
marginal independencies using 
constructive criteria that lead to 
well-structured enumeration procedures. The central
idea underlying our results is that faithful models
for marginal independencies are better described by posets
than by DAGs.
Our results allow
to quantify how much our uncertainty about a causal
structure is reduced when we invoke the faithfulness
assumption and 
observe a set of marginal independencies.

It future work, it would be interesting to extend our approach to
small (instead of empty) conditioning sets, which would
cover cases where we only wish to perform 
CI tests with low dimensionality. 


\clearpage

\bibliographystyle{abbrvnat}

\begin{thebibliography}{29}
\providecommand{\natexlab}[1]{#1}
\providecommand{\url}[1]{\texttt{#1}}
\expandafter\ifx\csname urlstyle\endcsname\relax
  \providecommand{\doi}[1]{doi: #1}\else
  \providecommand{\doi}{doi: \begingroup \urlstyle{rm}\Url}\fi

\bibitem[Bergsma(2004)]{Bergsma2004}
W.~P. Bergsma.
\newblock Testing conditional independence for continuous random variables.
\newblock Technical Report 2004-049, EURANDOM, 2004.

\bibitem[Brandst{\"a}dt et~al.(1999)Brandst{\"a}dt, Spinrad,
  et~al.]{Brandstaedt1999}
A.~Brandst{\"a}dt, J.~P. Spinrad, et~al.
\newblock \emph{Graph classes: a survey}, volume~3.
\newblock Siam, 1999.

\bibitem[Brinkmann and McKay(2002)]{Brinkmann2002}
G.~Brinkmann and B.~D. McKay.
\newblock Posets on up to 16 points.
\newblock \emph{Order}, 19\penalty0 (2):\penalty0 147--179, 2002.

\bibitem[{Cheston} and {Jap}(2006)]{Grant2006}
G.~A. {Cheston} and T.~{Jap}.
\newblock A survey of the algorithmic properties of simplicial, upper bound and
  middle graphs.
\newblock \emph{Journal of Graph Algorithms and Applications}, 10\penalty0
  (2):\penalty0 159--190, 2006.

\bibitem[Chickering(2003)]{Chickering03}
D.~M. Chickering.
\newblock Optimal structure identification with greedy search.
\newblock \emph{Journal of Machine Learning Research}, 3:\penalty0 507--554,
  2003.

\bibitem[Colombo et~al.(2012)Colombo, Maathuis, Kalisch, and
  Richardson]{Colombo2012}
D.~Colombo, M.~H. Maathuis, M.~Kalisch, and T.~S. Richardson.
\newblock Learning high-dimensional directed acyclic graphs with latent and
  selection variables.
\newblock \emph{Annals of Statistics}, 40\penalty0 (1):\penalty0 294--321,
  2012.

\bibitem[Cox and Wermuth(1993)]{Cox1993}
D.~R. Cox and N.~Wermuth.
\newblock Linear dependencies represented by chain graphs.
\newblock \emph{Statistical Science}, 8\penalty0 (3):\penalty0 204--283, 1993.

\bibitem[Doran et~al.(2014)Doran, Muandet, Zhang, and Sch\"olkopf]{Doran2014}
G.~Doran, K.~Muandet, K.~Zhang, and B.~Sch\"olkopf.
\newblock A permutation-based kernel conditional independence test.
\newblock In \emph{Proceedings of UAI 2014}, pages 132--141, 2014.

\bibitem[Drton and Richardson(2003)]{Drton2002}
M.~Drton and T.~S. Richardson.
\newblock A new algorithm for maximum likelihood estimation in gaussian
  graphical models for marginal independence.
\newblock In \emph{Proceedings of UAI 2003}, pages 184--191, 2003.

\bibitem[Drton and Richardson(2008{\natexlab{a}})]{Drton2008}
M.~Drton and T.~S. Richardson.
\newblock Graphical methods for efficient likelihood inference in gaussian
  covariance models.
\newblock \emph{Journal of Machine Learning Research}, 9:\penalty0 893--914,
  2008{\natexlab{a}}.

\bibitem[Drton and Richardson(2008{\natexlab{b}})]{Drton2008b}
M.~Drton and T.~S. Richardson.
\newblock Binary models for marginal independence.
\newblock \emph{Journal of the Royal Statistical Society, Ser. B}, 70\penalty0
  (2):\penalty0 287--309, 2008{\natexlab{b}}.

\bibitem[Idelberger(2014)]{idelberger2014}
A.~Idelberger.
\newblock Generating causal diagrams from stochastic dependencies (in
  {G}erman).
\newblock Master's thesis, Universit\"{a}t zu L{\"u}beck, Germany, 2014.

\bibitem[Kalisch and B\"uhlmann(2007)]{Kalisch2007}
M.~Kalisch and P.~B\"uhlmann.
\newblock Estimating high-dimensional directed acyclic graphs with the
  {PC}-algorithm.
\newblock \emph{Journal of Machine Learning Research}, 8:\penalty0 613--636,
  2007.

\bibitem[Karp(1972)]{karp1972reducibility}
R.~M. Karp.
\newblock \emph{Reducibility among combinatorial problems}.
\newblock Springer, 1972.

\bibitem[Kloks et~al.(2000)Kloks, Kratsch, and M\"uller]{kloks00}
T.~Kloks, D.~Kratsch, and H.~M\"uller.
\newblock Finding and counting small induced subgraphs efficiently.
\newblock \emph{Information Processing Letters}, 74:\penalty0 115--121, 2000.

\bibitem[{McMorris} and {Zaslavsky}(1982)]{Morris1982}
F.~{McMorris} and T.~{Zaslavsky}.
\newblock {Bound graphs of a partially ordered set.}
\newblock \emph{{Journal of Combinatorics, Information \& System Sciences}},
  7:\penalty0 134--138, 1982.
\newblock ISSN 0250-9628; 0976-3473/e.

\bibitem[Pearl and Verma(1987)]{pearl1987logic}
J.~Pearl and T.~Verma.
\newblock The logic of representing dependencies by directed graphs.
\newblock In \emph{Proceedings of AAAI 1987 -- Volume 1}, pages 374--379. AAAI
  Press, 1987.

\bibitem[Pearl and Wermuth(1994)]{Pearl1994}
J.~Pearl and N.~Wermuth.
\newblock \emph{When Can Association Graphs Admit A Causal Interpretation?},
  volume~89 of \emph{Lecture Notes in Statistics}, pages 205--214.
\newblock Springer, 1994.

\bibitem[Pe{\~n}a(2013)]{Penha2013}
J.~M. Pe{\~n}a.
\newblock Reading dependencies from covariance graphs.
\newblock \emph{International Journal of Approximate Reasoning}, 54\penalty0
  (1):\penalty0 216--227, 2013.

\bibitem[Richardson(2003)]{Richardson2003}
T.~S. Richardson.
\newblock Markov properties for acyclic directed mixed graphs.
\newblock \emph{The Scandinavian Journal of Statistics}, 30\penalty0
  (1):\penalty0 145--157, 2003.

\bibitem[Skowro\'nska and Sys\l{}o(1984)]{Skowronska1984}
M.~Skowro\'nska and M.~M. Sys\l{}o.
\newblock An algorithm to recognize a middle graph.
\newblock \emph{Discrete Applied Mathematics}, 7\penalty0 (2):\penalty0
  201--208, 1984.
\newblock ISSN 0166-218X.

\bibitem[Spirtes et~al.(2000)Spirtes, Glymour, and Scheines]{Spirtes2000}
P.~Spirtes, C.~N. Glymour, and R.~Scheines.
\newblock \emph{Causation, prediction, and search}.
\newblock MIT press, 2000.

\bibitem[Tan et~al.(2014)Tan, London, Mohan, Lee, Fazel, and Witten]{Tan2014}
K.~M. Tan, P.~London, K.~Mohan, S.-I. Lee, M.~Fazel, and D.~Witten.
\newblock {{L}earning {G}raphical {M}odels {W}ith {H}ubs}.
\newblock \emph{Journal of Machine Learning Research}, 15:\penalty0 3297--3331,
  Oct 2014.

\bibitem[Textor and Li{\'s}kiewicz(2011)]{textor11_uai}
J.~Textor and M.~Li{\'s}kiewicz.
\newblock Adjustment criteria in causal diagrams: An algorithmic perspective.
\newblock In \emph{Proceedings of UAI 2011}, pages 681--688. AUAI Press, 2011.

\bibitem[van~der Zander et~al.(2014)van~der Zander, Li\'skiewicz, and
  Textor]{vanconstructing}
B.~van~der Zander, M.~Li\'skiewicz, and J.~Textor.
\newblock Constructing separators and adjustment sets in ancestral graphs.
\newblock In \emph{Proceedings of UAI 2014}, pages 907--916, 2014.

\bibitem[Verma and Pearl(1993)]{Verma1993}
T.~Verma and J.~Pearl.
\newblock Deciding morality of graphs is {NP}-complete.
\newblock In \emph{Proceedings of UAI 1993}, pages 391--399, 1993.

\bibitem[Wambach(2015)]{oeisA007776}
G.~Wambach.
\newblock The {O}n-{L}ine {E}ncyclopedia of {I}nteger {S}equences.
\newblock http://oeis.org/A007776, 2015.
\newblock Number of connected posets with $n$ elements of height 1. Accessed in
  March 2015.

\bibitem[Wolk(1965)]{Wolk1965}
E.~S. Wolk.
\newblock A note on the comparability graph of a tree.
\newblock \emph{Proceedings of the American Mathematical Society}, 16:\penalty0
  17--20, 1965.

\bibitem[Zhang et~al.(2011)Zhang, Peters, Janzing, and Sch\"olkopf]{Zhang2011}
K.~Zhang, J.~Peters, D.~Janzing, and B.~Sch\"olkopf.
\newblock Kernel-based conditional independence test and application in causal
  discovery.
\newblock In \emph{Proceedings of UAI 2011}, pages 804--8013, 2011.

\end{thebibliography}

\clearpage

\appendix

\end{document}